%% file: main.tex
\documentclass{article}

\usepackage{arxiv}
\usepackage[utf8]{inputenc} 
\usepackage[T1]{fontenc}    
\usepackage{hyperref}       
\usepackage{url}            
\usepackage{booktabs}       
\usepackage{amsfonts}       
\usepackage{nicefrac}       
\usepackage{microtype}      
\usepackage{lipsum}
\usepackage{graphicx}
\graphicspath{ {./images/} }
\input{math_commands.tex}

\usepackage{natbib}
\usepackage{graphicx}
\usepackage{booktabs}
\usepackage{multirow}
\usepackage{amsmath,amssymb,amsfonts}
\usepackage{amsthm}
\usepackage{mathrsfs}
\usepackage[title]{appendix}
\usepackage{xcolor}
\usepackage{textcomp}
\usepackage{manyfoot}
\usepackage{algorithm}
\usepackage{algorithmicx}
\usepackage{algpseudocode}
\usepackage{listings}
\usepackage{float}
\usepackage{tikz}
\usetikzlibrary{positioning, arrows.meta}
\usepackage{adjustbox,lipsum}
\usepackage{silence}
\ErrorsOff*

\newtheorem{theorem}{Theorem}
\newtheorem{remark}{Remark}%

\title{Efficient Mean Curvature Computation on High-Dimensional Data Manifolds}

\author{
 Alexandre Luis Magalh\~aes Levada\\
  Federal University of S\~ao Carlos\\
  13565-905, S\~ao Carlos-SP, Brazil\\
  \texttt{alexandre.levada@ufscar.br} \\
}

\begin{document}
\maketitle
\begin{abstract}
Estimating local mean curvature at each point of a high-dimensional dataset is a key ingredient of geometry-aware machine learning algorithms, such as the Mean Curvature Boundary Points (MCBP) method. The naive implementation of this computation, based on a local shape operator approximated from k-nearest neighbor patches, involves an explicit construction of a matrix $H$ whose trace form yields an $O(m^4)$ cost per point, rendering the approach intractable for datasets with more than a few dozen features. This paper introduces two complementary contributions that together reduce this cost by several orders of magnitude.
The first contribution is an exact algebraic identity. This identity, derived from the orthogonality of the eigenvectors of the covariance matrix and the cyclicity of the trace operator, eliminates $H$ entirely and reduces the per-point cost to $O(m^2)$ after the eigendecomposition. The second contribution addresses the remaining $O(m^3)$ bottleneck of the full eigendecomposition. Since the local covariance matrix has rank at most $k-1 \ll m$, we replace it with a truncated SVD of the $k \times m$ centered data matrix, an $O(k^2 m)$ operation, and derive an analytical approximation for the contribution of the null-space eigenvectors based on the expected value of their outer product under the Haar measure. The resulting estimator has total cost $O(k^2 m + k m p^2)$, where $p = k-1$. Experiments on real-world datasets confirm speedups of 50 to 300 times relative to the original implementation, with negligible loss when the fast estimator is used to replace the original version. By providing a scalable and data-driven estimate of local curvature, the proposed method establishes curvature as a practical geometric feature for a broad range of machine learning tasks, from classical to modern deep learning pipelines.
\end{abstract}

\section{Introduction}
\label{sec:introduction}

The last two decades have witnessed the emergence of \emph{Geometric Machine Learning} (GML) as a unifying paradigm for data analysis, whose central tenet is that the geometric structure of the domain, rather than the vector-space structure of the ambient coordinates, should govern the design of representations, metrics, and algorithms \cite{GML,Papillon2025}. This perspective departs fundamentally from classical Euclidean-based machine learning, which treats every dataset as a collection of points in $\mathbb{R}^{m}$ and relies on the inner product structure of that space for tasks such as similarity search, dimensionality reduction, and classification. The limitations of this flat-geometry assumption became increasingly apparent as practitioners confronted data arising from social networks, brain-imaging graphs, molecular surfaces, protein interaction networks, and high-dimensional sensor arrays, domains in which the relevant notion of similarity or proximity is intrinsically non-Euclidean \citep{bronstein2017geometric}. The seminal position paper by \citet{bronstein2017geometric} crystallised these ideas under the banner of \emph{geometric deep learning}, arguing that the success of convolutional neural networks on images is itself a manifestation of a deeper symmetry principle, translational equivariance, and that a principled extension to graphs, manifolds, and other non-Euclidean domains requires replacing the Euclidean blueprint with one grounded in group theory and differential geometry. This programme was subsequently developed into a comprehensive theoretical framework by \citet{bronstein2021geometric}, who unified convolutional networks, graph neural networks, transformers, and equivariant architectures under a single geometric blueprint inspired by Klein's Erlangen Programme, formalising how symmetry groups (grids, graphs, geodesics, and gauges) determine the admissible learning architectures for each class of structured data.

Underlying this entire research agenda is the \emph{manifold hypothesis} \citep{fefferman2016testing}, which posits that high-dimensional data encountered in practice are not spread uniformly through $\mathbb{R}^{m}$ but are instead concentrated near a smooth, low-dimensional Riemannian manifold $\mathcal{M}$ of intrinsic dimension $d \ll m$ embedded in the ambient space. This hypothesis, formalised with sample-complexity guarantees by \citet{fefferman2016testing} and supported by a large body of empirical evidence, is the theoretical foundation of manifold learning — the family of methods that seek to recover the intrinsic geometry of $\mathcal{M}$ from finite, possibly noisy samples, without access to an explicit parametrisation of the manifold. Landmark algorithms such as ISOMAP \citep{tenenbaum2000global}, Locally Linear Embedding \citep{roweis2000nonlinear}, t-SNE \cite{tsne} and UMAP \cite{umap1,umap2} demonstrated that global geodesic structure and local neighborhood geometry, respectively, can be reliably recovered from point-cloud data, thereby opening the door to a rich family of geometry-aware representations. 

Building on these foundations, GML moves beyond the goal of merely \emph{embedding} data into a lower-dimensional Euclidean space and pursues instead the estimation of differential-geometric quantities, such as tangent spaces, curvature tensors, and shape operators, that characterise the local and global geometry of $\mathcal{M}$ directly from the
data. Local mean curvature, in particular, occupies a privileged role in this programme: it quantifies the degree to which the manifold bends in the ambient space at each point, encoding
information about cluster boundaries, concave and convex geometric features, and low-density transition regions that is invisible to density-based statistics alone \citep{asao2022curvature, cheng2021weingarten}. Efficient and scalable estimation of mean curvature from high-dimensional point clouds is therefore a problem of central importance in GML, yet existing approaches either assume low ambient dimension or incur computational costs that grow polynomially with $m$ in ways that render them intractable for modern datasets.

The present paper addresses this gap by deriving an exact algebraic identity that reduces the per-point cost of the mean curvature estimator introduced in \citet{mcbp2025} from $O(m^{4})$ to $O(m^{3})$, and by combining this identity with a truncated singular value decomposition to achieve a further reduction to $O(k^{2}m)$ for datasets in which the neighborhood size $k$ is small relative to the ambient dimension $m$, precisely the regime that characterises
high-dimensional machine learning applications.

The present paper makes three interrelated scientific contributions that together constitute a significant step forward in the computational foundations of Geometric Machine Learning. First, we establish an exact closed-form identity that reformulates the pointwise mean curvature estimator of \citet{mcbp2025}, originally defined through an explicit feature matrix $H \in \mathbb{R}^{m \times (m + \binom{m}{2})}$ involving all element-wise squares and pairwise products of local eigenvectors, as a compact expression involving only the $m \times m$ matrix $C = W^{\top}W^{(2)}$, where $W$ is the orthogonal matrix of eigenvectors of the local covariance and $W^{(2)}$ is its element-wise square. This identity, whose derivation exploits the orthogonality of $W$ to collapse a quartic tensor contraction into a single matrix multiplication followed by element-wise operations, reduces the dominant per-point computational cost from $O(m^{4})$ to $O(m^{3})$ with no approximation error.

Second, we show that, in the high-dimensional regime where $m \gg k$, the local covariance matrix has numerical rank at most $p = k - 1$, and we exploit this low-rank structure to replace the $O(m^{3})$ full eigendecomposition with a truncated singular value
decomposition of the $k \times m$ centred neighborhood matrix, at cost $O(k^{2}m)$. The contribution of the remaining $m - p$ null-space eigenvectors is handled via an analytical approximation grounded in the expected outer product of random orthonormal bases under the Haar measure, yielding a total per-point cost of $O(k^{2}m + kmp^{2})$.

Third, we demonstrate through systematic experiments that the resulting estimator achieves speedups of up to $\mathbf{800\times}$ over the original implementation for $m = 200$, while preserving the geometric fidelity required for downstream tasks, making high-dimensional mean curvature estimation, for the first time, computationally viable as a routine preprocessing step in modern machine learning pipelines.

Beyond the MCBP algorithm that motivated this work, the efficient estimation of local mean curvature opens new possibilities across a broad spectrum of application domains in which the intrinsic geometry of the data manifold carries information that density-based or distance-based statistics cannot capture. In \textit{unsupervised clustering and boundary detection}, high-curvature regions of the data manifold naturally identify transition zones between clusters, low-density interfaces, and geometric irregularities that mark the boundaries of data partitions~\citep{mcbp2025}; incorporating curvature as a feature in algorithms such as DBSCAN or HDBSCAN can therefore sharpen cluster boundaries and reduce sensitivity to bandwidth parameters in settings with non-linear or heterogeneous cluster shapes.

In \textit{anomaly and outlier detection}, points at which the local manifold exhibits unusually high or inconsistent curvature are natural candidates for anomalous observations, since they correspond to regions where the data departs from the smooth low-dimensional structure expected under the manifold hypothesis~\citep{fefferman2016testing}; this geometric
perspective on anomaly scoring complements, and in many cases outperforms, purely density-based criteria, as recently demonstrated for graph-structured data \citet{Curvgad} and multi-class unsupervised anomaly detection \cite{Dinomaly}. 

In \textit{semi-supervised and active learning}, identifying high-curvature boundary points provides a principled, geometry-driven criterion for selecting the most informative samples to label, since the classification uncertainty of discriminative models is highest precisely near the curved boundaries between classes~\citep{mcbp2025}; this yields a curvature-aware
query strategy that is complementary to conventional uncertainty-sampling or margin-based approaches. In \textit{single-cell genomics and bioinformatics}, where datasets routinely have thousands of features ($m \sim 10^{3}$--$10^{4}$) but lie near manifolds of intrinsic dimension of order tens to hundreds, local curvature can reveal differentiation trajectories,
identify transitional cell states near bifurcation points of the developmental manifold, and flag cells at the geometric boundary between annotated cell types, tasks for which current
dimensionality reduction pipelines based on PCA, UMAP, or $t$-SNE provide only indirect and nonparametric evidence~\citep{imoto2022scrna}. In tabular data, autoencoder-based semi-supervised learning architectures have shown results comparable to state-of-the-art methods, particularly in scenarios with very limited labeled data \cite{SSLAE}, indicating that curvature-aware autoencoders and active learning methods \cite{AL2023} are a promising natural direction of evolution. 

In \textit{geometric deep learning on graphs and point clouds}, node-level or point-level curvature estimates provide expressive geometric descriptors that can be incorporated as input features or regularisation signals in graph neural networks, in a manner analogous to how discrete Ricci curvature has been used to detect community structure, improve message passing, and characterise anomalous nodes in attributed networks~\citep{Curvgad, cheng2021weingarten}. The computational advances reported here, by making mean curvature
estimation tractable for the ambient dimensions encountered in these domains, remove a key bottleneck that has so far prevented the wider adoption of curvature-aware methods in high-dimensional machine learning.

The remaining of the paper is organized as follows: Section 2 presents the theoretical background on differential geometry. Section 3 discusses the original local mean curvature estimation algorithm, which is computationally unfeasible for large and high dimensional datasets. Section 4 describes the proposed algebraic identity for efficient local mean curvature computation and the resulting algorithm. Section 5 shows the computational experiments and the obtained results. Finally, Section 6 presents the conclusions and final remarks. 

\section{Differential Geometry Basics}

The mathematical backbone of the proposed method is drawn from classical differential geometry, the branch of mathematics concerned with the infinitesimal structure of smooth curves, surfaces, and their higher-dimensional generalisations, Riemannian manifolds.
Rather than treating a dataset as an unstructured collection of points in $\mathbb{R}^{m}$, we adopt the perspective, now standard in Geometric Machine Learning, that observations are discrete samples from an underlying smooth manifold $\mathcal{M}$ embedded in the ambient
space, and that the most informative geometric signal resides not in pairwise distances alone but in the way $\mathcal{M}$ curves and stretches relative to its embedding
\citep{spivak1999comprehensive, do_carmo_differential_2016,	ONeill2006, tu2017differential}.
To make this precise, we rely on five classical objects. The \emph{tangent space} $T_{p}\mathcal{M}$ at a point $p \in \mathcal{M}$ is the best linear approximation to the manifold at $p$, encoding the directions along which one can move while remaining, to first order, on $\mathcal{M}$. The \emph{first fundamental form}, or metric tensor $g$, is the
restriction of the ambient inner product to each tangent space; it governs intrinsic measurements (lengths, angles, and areas) that are independent of how $\mathcal{M}$ sits in $\mathbb{R}^{m}$. The \emph{second fundamental form} $\mathcal{I\!I}$ captures the
complementary, extrinsic information: it measures how much tangent vectors twist out of the tangent space as one moves along $\mathcal{M}$, thereby quantifying the bending of the manifold in the ambient space \citep{do_carmo_differential_2016, DGApp}. From these two forms one constructs the \emph{shape operator} $\mathcal{S}$, the self-adjoint linear map from $T_{p}\mathcal{M}$ to itself obtained by composing $\mathcal{I\!I}$ with the inverse of
$g$, whose eigenvalues are the principal curvatures $\kappa_{1}, \ldots, \kappa_{d}$ of $\mathcal{M}$ at $p$. Their arithmetic mean defines the \emph{mean curvature} $H = d^{-1}\sum_{i=1}^{d} \kappa_{i}$, the scalar quantity that aggregates the net bending of the manifold at each point into a single, geometrically interpretable value \citep{spivak1999comprehensive, needham2021visual, ONeill2006}. In the data-analytic context that motivates this paper, mean curvature serves as a principled, nonparametric indicator of geometric irregularity: regions of $\mathcal{M}$ where $H$ is elevated correspond to sharp transitions between dense clusters, concave or convex geometric features, and low-density interfaces, precisely the boundary structures that classical density-based statistics tend to
conflate with outliers or noise \citep{mcbp2025}. The remainder of this section formalises these objects in the notation used throughout the paper, with an emphasis on the discrete,
sample-based estimators through which they become computationally accessible.

\subsection{Tangent Spaces}
\label{subsec:tangent_spaces}

The first step toward a rigorous treatment of curvature is to formalise the notion of \emph{direction} on a manifold. Unlike points in $\mathbb{R}^{m}$, whose directions are globally defined by the standard basis, a manifold $\mathcal{M}$ need not inherit a canonical linear structure from the ambient space. The tangent space resolves this difficulty by associating to each point $p \in \mathcal{M}$ a vector space that captures, to first
order, the local geometry of $\mathcal{M}$ near $p$ \citep{do_carmo_differential_2016, tu2017differential}.

\paragraph{Intuitive description.}
Consider a smooth curve $\boldsymbol{\alpha} : (-\varepsilon, \varepsilon) \to \mathcal{M}$ with $\boldsymbol{\alpha}(0) = p$. The velocity vector $\boldsymbol{\alpha}'(0) \in \mathbb{R}^{m}$ points in a direction that is tangent to $\mathcal{M}$ at $p$. The collection of all such velocity vectors, as $\boldsymbol{\alpha}$ ranges over all smooth curves through $p$, forms a $d$-dimensional linear subspace of $\mathbb{R}^{m}$, the tangent space at $p$.
Intuitively, $T_{p}\mathcal{M}$ is the best flat (affine) approximation to $\mathcal{M}$ at $p$: it is the unique $d$-dimensional hyperplane that osculates $\mathcal{M}$ to first order at that point \citep{ONeill2006, needham2021visual}. Figure \ref{fig:tangent} illustrates the tangent plane to a sphere at a given point.

\begin{figure}
	\centering
	\includegraphics[scale=0.2]{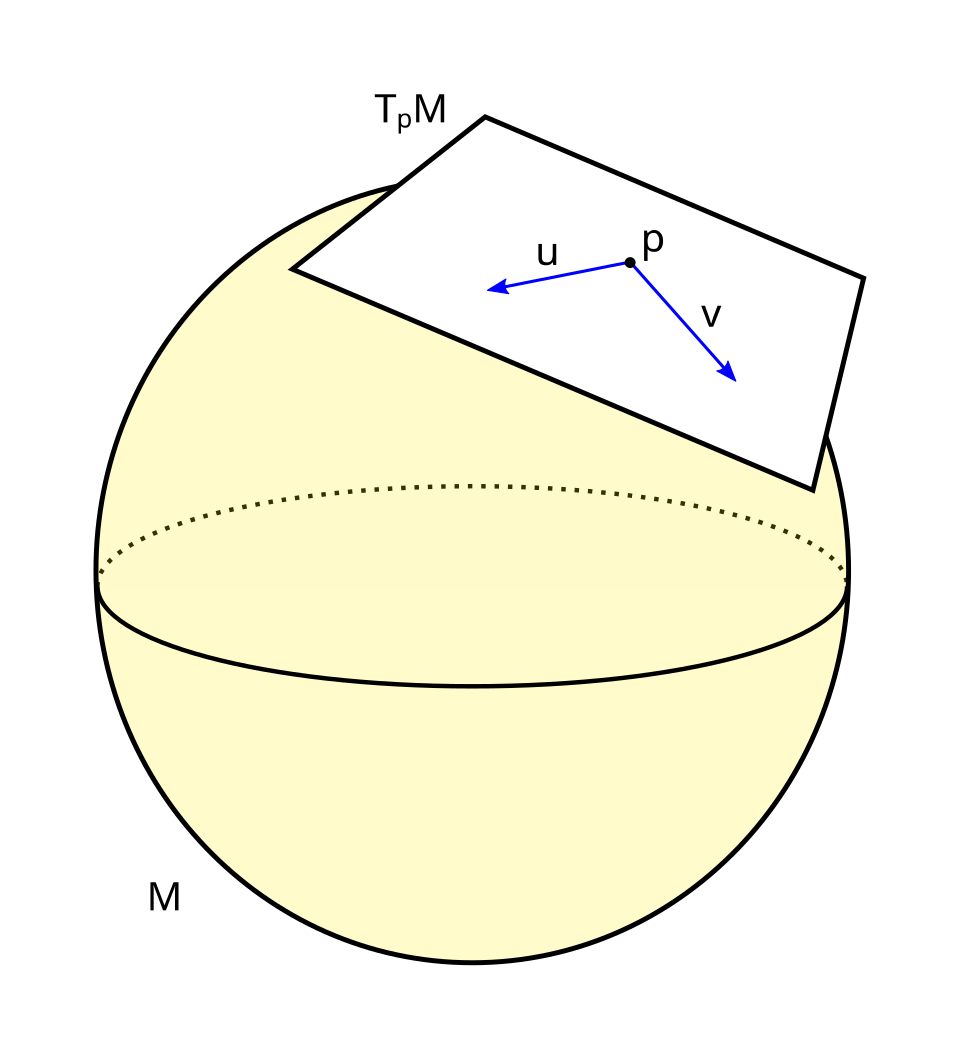}
	\caption{Illustration of the tangent space at a point $\mathbf{x}$ on a sphere embedded in $\mathbb{R}^3$. The tangent plane $T_{\mathbf{x}}\mathcal{M}$ provides a first-order linear approximation of the manifold in a local neighborhood of $\mathbf{x}$, capturing the directions of admissible infinitesimal variations along the surface.}
	\label{fig:tangent}
\end{figure} 

\paragraph{Formal definition.}
Let $\mathcal{M}$ be a smooth $d$-dimensional manifold embedded in $\mathbb{R}^{m}$, and let $p \in \mathcal{M}$. The \emph{tangent space} of $\mathcal{M}$ at $p$ is defined as

\begin{equation}
	T_{p}\mathcal{M}
	\;=\;
	\bigl\{
	\boldsymbol{\alpha}'(0) \in \mathbb{R}^{m}
	\;\big|\;
	\boldsymbol{\alpha} : (-\varepsilon, \varepsilon) \to \mathcal{M}
	\text{ smooth},\;
	\boldsymbol{\alpha}(0) = p
	\bigr\}.
	\label{eq:tangent_def}
\end{equation}

One verifies that $T_{p}\mathcal{M}$ is closed under addition and scalar multiplication, and hence is indeed a vector space of dimension $d$ \citep{tu2017differential}. Elements of $T_{p}\mathcal{M}$ are called \emph{tangent vectors} at $p$.

\paragraph{Coordinate representation.}
Let $\boldsymbol{\phi} : U \subset \mathbb{R}^{d} \to \mathcal{M}$ be a local parametrisation of $\mathcal{M}$ around $p$, with $\boldsymbol{\phi}(\mathbf{u}_{0}) = p$ for some $\mathbf{u}_{0} \in U$. The partial derivatives of $\boldsymbol{\phi}$ at $\mathbf{u}_{0}$,

\begin{equation}
	\partial_{i}
	\;\equiv\;
	\frac{\partial \boldsymbol{\phi}}{\partial u^{i}}
	\bigg|_{\mathbf{u}_{0}}
	\;\in\; \mathbb{R}^{m},
	\qquad i = 1, \ldots, d,
	\label{eq:coord_basis}
\end{equation}

form a basis for $T_{p}\mathcal{M}$, so that every tangent vector $\mathbf{v} \in T_{p}\mathcal{M}$ can be written as

\begin{equation}
	\mathbf{v}
	\;=\;
	\sum_{i=1}^{d} v^{i}\, \partial_{i},
	\qquad v^{i} \in \mathbb{R}.
	\label{eq:tangent_coords}
\end{equation}

The matrix $J_{\boldsymbol{\phi}} = [\partial_{1}\mid\cdots\mid\partial_{d}] \in \mathbb{R}^{m \times d}$ is the Jacobian of the parametrisation; its column space is precisely
$T_{p}\mathcal{M}$ \citep{do_carmo_differential_2016, DGApp}.

\paragraph{The tangent bundle.}
Varying $p$ over all of $\mathcal{M}$ yields the \emph{tangent bundle}

\begin{equation}
	T\mathcal{M}
	\;=\;
	\bigsqcup_{p \in \mathcal{M}} T_{p}\mathcal{M}
	\;=\;
	\bigl\{(p, \mathbf{v})
	\;\big|\;
	p \in \mathcal{M},\;
	\mathbf{v} \in T_{p}\mathcal{M}
	\bigr\},
	\label{eq:tangent_bundle}
\end{equation}

which is itself a smooth manifold of dimension $2d$. A \emph{vector field} on $\mathcal{M}$ is a smooth section of $T\mathcal{M}$, \textit{i.e.}, a smooth map $X : \mathcal{M} \to T\mathcal{M}$ with $X(p) \in T_{p}\mathcal{M}$ for every $p$. Vector fields serve as the natural domain of differential operators such as the covariant derivative and the Lie bracket, which in turn underpin the definitions of the curvature tensors introduced in
subsequent subsections \citep{spivak1999comprehensive, ONeill2006}.

\paragraph{Normal space.}
The orthogonal complement of $T_{p}\mathcal{M}$ in $\mathbb{R}^{m}$ is the \emph{normal space} at $p$,

\begin{equation}
	N_{p}\mathcal{M}
	\;=\;
	\bigl\{
	\mathbf{n} \in \mathbb{R}^{m}
	\;\big|\;
	\langle \mathbf{n}, \mathbf{v} \rangle = 0
	\;\;\forall\, \mathbf{v} \in T_{p}\mathcal{M}
	\bigr\},
	\label{eq:normal_space}
\end{equation}

of dimension $m - d$. The ambient space decomposes orthogonally as $\mathbb{R}^{m} = T_{p}\mathcal{M} \oplus N_{p}\mathcal{M}$, and the unit vectors in $N_{p}\mathcal{M}$ are the \emph{normals} to $\mathcal{M}$ at $p$. For a hypersurface ($d = m-1$), the normal space is one-dimensional and spanned by a single unit normal $\mathbf{n}(p)$; for co-dimension greater than one, there exists a family of normal directions, and the extrinsic curvature depends on the chosen normal \citep{do_carmo_differential_2016, spivak1999comprehensive}.

\paragraph{Relevance in the data-analytic setting.}
In the context of machine learning, the tangent space plays a twofold role. First, it provides the local linear model underlying a wide family of geometry-aware algorithms: principal component analysis (PCA), locally linear embedding \citep{roweis2000nonlinear}, and diffusion maps all implicitly estimate $T_{p}\mathcal{M}$ through the leading eigenvectors of local covariance or graph-Laplacian operators. Second, and most directly relevant to the present work, the tangent space is the domain on which the shape operator $\mathcal{S}$
acts and on which the second fundamental form $\mathrm{I\!I}$ is defined; it is therefore the indispensable geometric substrate for any curvature-based analysis. Given a local neighborhood $\mathcal{N}(p)$ of $p$ in the dataset, we estimate $T_{p}\mathcal{M}$ as the span of the $d$ leading eigenvectors of the sample covariance matrix $\boldsymbol{\Sigma}_{p}$ (defined in Section~\ref{subsec:setup}), a procedure whose consistency under the manifold hypothesis is established in \citet{Singer2012} and whose sample complexity is controlled by the local reach and curvature of $\mathcal{M}$. This eigenvector-based estimate of the tangent space is not only the starting point for our curvature estimator but also the object whose orthogonality, formally expressed as $W^{\!\top}W = I_{m}$, drives the key algebraic simplification derived in Section~\ref{sec:identity}.

\subsection{First Fundamental Form}
\label{subsubsec:first_ff}

Having established the tangent space $T_{p}\mathcal{M}$ as the natural linear approximation to the manifold at each point, the next step is to equip it with a notion of \emph{length} and \emph{angle}. This is the role of the first fundamental form, the foundational
object of intrinsic Riemannian geometry: it encodes how distances and angles on $\mathcal{M}$ relate to those of the ambient space $\mathbb{R}^{m}$, without reference to any particular embedding \citep{do_carmo_differential_2016, spivak1999comprehensive}.

\paragraph{Definition.}
Let $\mathcal{M}$ be a smooth $d$-dimensional manifold embedded in $\mathbb{R}^{m}$, and let $p \in \mathcal{M}$. The \emph{first fundamental form} at $p$, also called the \emph{metric tensor} or \emph{Riemannian metric}, is the bilinear form

\begin{equation}
	g_{p} : T_{p}\mathcal{M} \times T_{p}\mathcal{M} \longrightarrow \mathbb{R},
	\qquad
	g_{p}(\mathbf{u}, \mathbf{v})
	\;=\;
	\langle \mathbf{u},\, \mathbf{v} \rangle_{\mathbb{R}^{m}},
	\label{eq:metric_def}
\end{equation}

obtained by restricting the standard Euclidean inner product of $\mathbb{R}^{m}$ to the tangent space $T_{p}\mathcal{M}$. The map $p \mapsto g_{p}$ is required to be smooth, yielding a smooth $(0,2)$-tensor field on $\mathcal{M}$ \citep{tu2017differential}. By construction, $g_{p}$ is symmetric, 

\begin{equation}
	g_{p}(\mathbf{u}, \mathbf{v})
	\;=\;
	g_{p}(\mathbf{v}, \mathbf{u}),
	\qquad
	\forall\, \mathbf{u}, \mathbf{v} \in T_{p}\mathcal{M},
	\label{eq:metric_sym}
\end{equation}

and positive definite,

\begin{equation}
	g_{p}(\mathbf{v}, \mathbf{v}) \;\geq\; 0,
	\qquad
	g_{p}(\mathbf{v}, \mathbf{v}) = 0
	\;\Longleftrightarrow\;
	\mathbf{v} = \mathbf{0}.
	\label{eq:metric_pd}
\end{equation}

\paragraph{Matrix representation in local coordinates.}
Let $\boldsymbol{\phi} : U \subset \mathbb{R}^{d} \to \mathcal{M}$ be a local parametrisation around $p$, with coordinate basis $\{\partial_{1}, \ldots, \partial_{d}\}$ as defined in
\eqref{eq:coord_basis}. The metric tensor is completely determined by its values on pairs of
basis vectors, which define the \emph{Gram matrix} $G = (g_{ij}) \in \mathbb{R}^{d \times d}$ with entries

\begin{equation}
	g_{ij}
	\;=\;
	g_{p}(\partial_{i},\, \partial_{j})
	\;=\;
	\left\langle
	\frac{\partial \boldsymbol{\phi}}{\partial u^{i}},\,
	\frac{\partial \boldsymbol{\phi}}{\partial u^{j}}
	\right\rangle_{\!\mathbb{R}^{m}},
	\qquad
	i, j = 1, \ldots, d.
	\label{eq:gram_entries}
\end{equation}

In matrix form, denoting the Jacobian of the parametrisation by $J_{\boldsymbol{\phi}} = [\partial_{1} \mid \cdots \mid \partial_{d}] \in \mathbb{R}^{m \times d}$, we have the compact expression

\begin{equation}
	G
	\;=\;
	J_{\boldsymbol{\phi}}^{\top} J_{\boldsymbol{\phi}}
	\;\in\; \mathbb{R}^{d \times d}.
	\label{eq:gram_matrix}
\end{equation}

An arbitrary tangent vector $\mathbf{v} = \sum_{i} v^{i}\partial_{i}$ has squared norm

\begin{equation}
	\|\mathbf{v}\|^{2}_{g}
	\;=\;
	g_{p}(\mathbf{v}, \mathbf{v})
	\;=\;
	\sum_{i,j=1}^{d} g_{ij}\, v^{i} v^{j}
	\;=\;
	\mathbf{v}^{\top} G\, \mathbf{v},
	\label{eq:norm_g}
\end{equation}

and the angle $\theta$ between two tangent vectors $\mathbf{u}$ and $\mathbf{v}$ is given by

\begin{equation}
	\cos\theta
	\;=\;
	\frac{g_{p}(\mathbf{u}, \mathbf{v})}
	{\|\mathbf{u}\|_{g}\,\|\mathbf{v}\|_{g}}
	\;=\;
	\frac{\mathbf{u}^{\top} G\, \mathbf{v}}
	{\sqrt{\mathbf{u}^{\top} G\, \mathbf{u}}\,
		\sqrt{\mathbf{v}^{\top} G\, \mathbf{v}}}.
	\label{eq:angle_g}
\end{equation}

\paragraph{Intrinsic geometric quantities.}
The metric tensor governs three fundamental measurements on $\mathcal{M}$, all of which are intrinsic, that is, independent of the particular embedding in $\mathbb{R}^{m}$.

The \emph{arc length} of a smooth curve $\boldsymbol{\alpha} : [a, b] \to \mathcal{M}$ is

\begin{equation}
	\mathcal{L}(\boldsymbol{\alpha})
	\;=\;
	\int_{a}^{b}
	\sqrt{g_{\boldsymbol{\alpha}(t)}
		\!\bigl(\boldsymbol{\alpha}'(t),\, \boldsymbol{\alpha}'(t)\bigr)}
	\;\mathrm{d}t
	\;=\;
	\int_{a}^{b}
	\bigl\|\boldsymbol{\alpha}'(t)\bigr\|_{\mathbb{R}^{m}}
	\;\mathrm{d}t.
	\label{eq:arc_length}
\end{equation}

The \emph{volume element} on $\mathcal{M}$, used to integrate scalar fields over the manifold, is

\begin{equation}
	\mathrm{d}V_{\mathcal{M}}
	\;=\;
	\sqrt{\det G}\;\mathrm{d}u^{1}\cdots\mathrm{d}u^{d},
	\label{eq:vol_element}
\end{equation}

where $\det G > 0$ by positive definiteness of $g$.
The \emph{geodesic distance} between two points $p, q \in \mathcal{M}$
is the infimum of arc lengths over all smooth curves connecting them:

\begin{equation}
	d_{\mathcal{M}}(p, q)
	\;=\;
	\inf_{\boldsymbol{\alpha}:\, \boldsymbol{\alpha}(a)=p,\,
		\boldsymbol{\alpha}(b)=q}
	\mathcal{L}(\boldsymbol{\alpha}).
	\label{eq:geodesic_dist}
\end{equation}

Geodesic distance, rather than Euclidean distance in $\mathbb{R}^{m}$, is the natural notion of proximity on $\mathcal{M}$ and underlies algorithms such as ISOMAP \citep{tenenbaum2000global}.

\paragraph{Relationship to the covariance matrix.}
In the discrete, data-driven setting, the first fundamental form admits a natural sample-based estimator. Given the centred neighborhood matrix $X_{c} \in \mathbb{R}^{k \times m}$ (whose rows are the centred vectors $\mathbf{x}_{i_{j}} - \bar{\mathbf{x}}$), the sample covariance matrix

\begin{equation}
	\boldsymbol{\Sigma}
	\;=\;
	\frac{1}{k-1}\,X_{c}^{\top} X_{c}
	\;\in\; \mathbb{R}^{m \times m}
	\label{eq:sample_cov}
\end{equation}

can be interpreted as a discrete approximation of the metric tensor pulled back to the ambient coordinates. More precisely, if $V \in \mathbb{R}^{m \times d}$ is the matrix whose
columns are the $d$ leading eigenvectors of $\boldsymbol{\Sigma}$ (providing an estimate of the tangent basis $\{\partial_{i}\}$), then the estimated Gram matrix is

\begin{equation}
	\widehat{G}
	\;=\;
	V^{\top} \boldsymbol{\Sigma}\, V
	\;=\;
	\operatorname{diag}(\lambda_{1}, \ldots, \lambda_{d}),
	\label{eq:estimated_gram}
\end{equation}

where $\lambda_{1} \geq \cdots \geq \lambda_{d}$ are the leading eigenvalues of $\boldsymbol{\Sigma}$. This diagonal form reflects the fact that the PCA coordinate system
diagonalises the metric, so that the estimated tangent directions are locally orthonormal up to the scale factors $\lambda_{i}$ \citep{do_carmo_differential_2016, DGApp}.

\paragraph{Relevance in the proposed method.}
The first fundamental form plays a twofold role in our curvature estimator. On the one hand, $\boldsymbol{\Sigma}$, whose spectral decomposition drives both the exact and the fast modes of Algorithm~1, encodes precisely the sample-based metric of the local neighborhood, so
that the full eigendecomposition $\boldsymbol{\Sigma} = W \operatorname{diag}(\mathbf{v}) W^{\top}$ simultaneously estimates the tangent frame ($W$) and the local scale factors ($\mathbf{v}$). On the other hand, the shape operator $\mathcal{S}$ (shape operator), requires the metric tensor through the relation $\mathcal{S} = G^{-1}L$, where $L$ is the matrix of the second fundamental form; in the eigenvector basis, $G$ becomes the identity matrix (since PCA yields an orthonormal frame), which is precisely the simplification that makes the algebraic identity of Theorem~\ref{thm:identity} possible \citep{ONeill2006, cheng2021weingarten}.

\subsection{Second Fundamental Form}
\label{subsubsec:second_ff}

While the first fundamental form captures the \emph{intrinsic} geometry of $\mathcal{M}$, lengths, angles, and areas that can be measured without leaving the manifold, it is entirely blind to how $\mathcal{M}$ bends in the ambient space $\mathbb{R}^{m}$. Two surfaces with identical metric tensors may have radically different shapes: a flat plane and a cylinder, for instance, are locally isometric yet geometrically distinct objects in $\mathbb{R}^{3}$.
The \emph{second fundamental form} is the fundamental extrinsic object that resolves this ambiguity: it encodes, at each point $p \in \mathcal{M}$, the rate at which the manifold departs from its tangent hyperplane, thereby quantifying how $\mathcal{M}$ curves relative to its embedding \citep{do_carmo_differential_2016, spivak1999comprehensive, ONeill2006}.

\paragraph{Motivation via normal curvature.}
Consider a smooth curve $\boldsymbol{\alpha} : (-\varepsilon,\varepsilon) \to \mathcal{M}$ with $\boldsymbol{\alpha}(0) = p$ and unit tangent vector $\mathbf{t} =  \boldsymbol{\alpha}'(0) \in T_{p}\mathcal{M}$. The acceleration $\boldsymbol{\alpha}''(0) \in \mathbb{R}^{m}$ measures how rapidly the curve bends in the ambient space. Decomposing this acceleration into its tangential and normal components with respect to $\mathcal{M}$,

\begin{equation}
	\boldsymbol{\alpha}''(0)
	\;=\;
	\underbrace{
		\bigl(\boldsymbol{\alpha}''(0)\bigr)^{\!\top}
	}_{\text{tangential}}
	+\;
	\underbrace{
		\bigl(\boldsymbol{\alpha}''(0)\bigr)^{\!\perp}
	}_{\text{normal}},
	\label{eq:accel_decomp}
\end{equation}

the normal component measures the bending of $\mathcal{M}$ itself in the direction $\mathbf{t}$, independently of how the curve is parametrised within $\mathcal{M}$.
The \emph{normal curvature} in the direction $\mathbf{t}$ is defined as

\begin{equation}
	\kappa_{\mathbf{n}}(\mathbf{t})
	\;=\;
	\bigl\langle \boldsymbol{\alpha}''(0),\, \mathbf{n}(p)
	\bigr\rangle_{\mathbb{R}^{m}},
	\label{eq:normal_curv}
\end{equation}

where $\mathbf{n}(p)$ is a unit normal to $\mathcal{M}$ at $p$. One can show that $\kappa_{\mathbf{n}}(\mathbf{t})$ depends only on the direction $\mathbf{t}$, not on the particular curve $\boldsymbol{\alpha}$ chosen to represent it, a result known as Meusnier's theorem \citep{do_carmo_differential_2016}. The second fundamental form systematises these normal curvature values into a single bilinear object.

\paragraph{Definition.}
Let $\mathbf{n} : \mathcal{M} \to \mathbb{R}^{m}$ be a smooth unit normal field on $\mathcal{M}$, and let $D_{\mathbf{u}}\,\mathbf{n}$ denote the directional derivative of $\mathbf{n}$ in the direction $\mathbf{u} \in T_{p}\mathcal{M}$. The \emph{second fundamental form} at $p$ is the symmetric bilinear form

\begin{equation}
	\mathcal{I\!I}_{p} :
	T_{p}\mathcal{M} \times T_{p}\mathcal{M}
	\longrightarrow \mathbb{R},
	\qquad
	\mathcal{I\!I}_{p}(\mathbf{u}, \mathbf{v})
	\;=\;
	-\,\bigl\langle
	D_{\mathbf{u}}\,\mathbf{n},\; \mathbf{v}
	\bigr\rangle_{\mathbb{R}^{m}}.
	\label{eq:second_ff_def}
\end{equation}

The negative sign is conventional and ensures that the second fundamental form is positive when the manifold curves toward the normal direction \citep{ONeill2006}. Symmetry of $\mathcal{I\!I}_{p}$ follows from the identity $\langle D_{\mathbf{u}}\,\mathbf{n}, \mathbf{v}\rangle = \langle D_{\mathbf{v}}\,\mathbf{n}, \mathbf{u}\rangle$, which holds for any smooth surface and is a consequence of the symmetry of mixed partial derivatives \citep{do_carmo_differential_2016}. An equivalent and computationally convenient expression is obtained by differentiating the constraint $\langle \mathbf{n}(p), \mathbf{v}\rangle = 0$ along $\mathcal{M}$:

\begin{equation}
	\mathcal{I\!I}_{p}(\mathbf{u}, \mathbf{v})
	\;=\;
	\bigl\langle
	\mathbf{n}(p),\; D_{\mathbf{u}}\,\mathbf{v}
	\bigr\rangle_{\mathbb{R}^{m}},
	\label{eq:second_ff_alt}
\end{equation}

which relates $\mathcal{I\!I}$ to the ambient acceleration of tangent vector fields rather than to the derivative of the normal. Through this expression, one can verify that $\mathcal{I\!I}_{p}(\mathbf{t}, \mathbf{t}) = \kappa_{\mathbf{n}}(\mathbf{t})$, confirming that the second fundamental form encodes the normal curvature in every tangent direction \citep{spivak1999comprehensive}.

\paragraph{Matrix representation in local coordinates.}
Let $\{\partial_{1}, \ldots, \partial_{d}\}$ be the coordinate basis of $T_{p}\mathcal{M}$ induced by a local parametrisation $\boldsymbol{\phi}$ as in~\eqref{eq:coord_basis}. The second fundamental form is completely determined by the \emph{curvature matrix} $L = (l_{ij}) \in \mathbb{R}^{d \times d}$ with entries

\begin{equation}
	l_{ij}
	\;=\;
	\mathcal{I\!I}_{p}(\partial_{i},\, \partial_{j})
	\;=\;
	-\left\langle
	\frac{\partial \mathbf{n}}{\partial u^{i}},\;
	\partial_{j}
	\right\rangle_{\mathbb{R}^{m}}
	\;=\;
	\left\langle
	\mathbf{n},\;
	\frac{\partial^{2} \boldsymbol{\phi}}{\partial u^{i}
		\partial u^{j}}
	\right\rangle_{\mathbb{R}^{m}},
	\label{eq:curvature_matrix}
\end{equation}

where the second equality uses the smoothness of $\boldsymbol{\phi}$ and differentiation of the orthogonality constraint $\langle \mathbf{n}, \partial_{j}\rangle = 0$ \citep{do_carmo_differential_2016, DGApp}. The symmetry $l_{ij} = l_{ji}$ is again a consequence of the equality of mixed partial derivatives. For an arbitrary unit tangent vector
$\mathbf{v} = \sum_{i} v^{i}\partial_{i}$, the normal curvature in the direction $\mathbf{v}$ is

\begin{equation}
	\kappa_{\mathbf{n}}(\mathbf{v})
	\;=\;
	\frac{\mathcal{I\!I}_{p}(\mathbf{v},\mathbf{v})}
	{g_{p}(\mathbf{v},\mathbf{v})}
	\;=\;
	\frac{\mathbf{v}^{\top} L\, \mathbf{v}}
	{\mathbf{v}^{\top} G\, \mathbf{v}},
	\label{eq:normal_curv_ratio}
\end{equation}

a Rayleigh-type quotient whose extrema over all unit tangent vectors yield the principal curvatures $\kappa_{1} \geq \kappa_{2} \geq \cdots \geq \kappa_{d}$ \citep{ONeill2006}.

\paragraph{Geometric interpretation.}
Equation~\eqref{eq:normal_curv_ratio} reveals that the second fundamental form governs the full spectrum of curvature at each point. Its eigenstructure, determined by the generalised eigenproblem $L\mathbf{v} = \kappa\, G\mathbf{v}$, simultaneously provides the principal curvatures $\kappa_{i}$ (eigenvalues) and the principal curvature directions (eigenvectors).
When all $\kappa_{i} > 0$, the manifold is locally \emph{convex}, bending toward the normal on every side; when signs are mixed, the point is a \emph{saddle}, with the manifold curving toward the normal in some directions and away in others; when all $\kappa_{i} = 0$, the point is locally \emph{flat} and $\mathcal{I\!I}_{p} \equiv 0$ \citep{spivak1999comprehensive, needham2021visual}. These geometric cases translate directly into data-analytic distinctions: convex regions correspond to the interior of dense clusters, flat regions to smooth manifold patches, and saddle or mixed-curvature regions to geometric transitions and boundaries between data classes.

\paragraph{Extension to higher co-dimension.}
For hypersurfaces ($d = m-1$), the unit normal $\mathbf{n}(p)$ is unique up to sign, and the second fundamental form~\eqref{eq:second_ff_def} is well-defined as a scalar-valued bilinear form. For embeddings of higher co-dimension ($d < m-1$), the normal space $N_{p}\mathcal{M}$ has dimension greater than one, and for each unit normal $\boldsymbol{\nu} \in N_{p}\mathcal{M}$ one obtains a separate scalar-valued form

\begin{equation}
	\mathcal{I\!I}^{\boldsymbol{\nu}}_{p}(\mathbf{u}, \mathbf{v})
	\;=\;
	-\,\bigl\langle
	D_{\mathbf{u}}\,\boldsymbol{\nu},\; \mathbf{v}
	\bigr\rangle_{\mathbb{R}^{m}}.
	\label{eq:second_ff_high_codim}
\end{equation}

The full extrinsic curvature is then encoded by the \emph{vector-valued} second fundamental form $\vec{\mathcal{I\!I}}_{p}(\mathbf{u},\mathbf{v}) \in N_{p}\mathcal{M}$, whose projection onto each normal direction recovers~\eqref{eq:second_ff_high_codim} \citep{spivak1999comprehensive, do_carmo_differential_2016}. In the high-dimensional data-analytic setting considered in this paper, where $\mathcal{M}$ is a $d$-dimensional manifold with $d \ll m$, the co-dimension $m - d$ can be large, and the mean curvature is defined as the trace of the shape operator relative to a chosen normal direction, a construction made precise in the following subsection.

\paragraph{Relevance in the proposed method.}
The second fundamental form occupies a central role in the curvature estimator of \citet{mcbp2025} that motivates this paper. Specifically, the feature matrix $H = [W^{(2)} \mid W^{(\times)}]$ introduced in Section~\ref{subsec:setup} is a discrete approximation of the curvature matrix $L$, constructed from the eigenvectors of the local sample covariance $\boldsymbol{\Sigma}$. The element-wise squared columns $W^{(2)}$ encode the diagonal entries
$l_{ii}$ of $L$, the normal curvatures along the principal directions, while the cross-product columns $W^{(\times)}$ encode the off-diagonal entries $l_{ij}$, capturing the coupling between distinct curvature directions. The product $HH^{\top}$ thus approximates the Gram matrix of the curvature tensor, and its contraction with $\boldsymbol{\Sigma}$ recovers the trace of the shape operator $\mathcal{S} = G^{-1}L$, that is, the mean curvature as defined in \citep{mcbp2025, cheng2021weingarten}. The closed-form identity of Theorem~\ref{thm:identity} then shows that this entire computation, which naively requires forming a matrix with $O(m^{2})$ columns, reduces to a single $m \times m$ matrix product through the orthogonality of the eigenvector frame, an algebraic consequence of the fact that, in the PCA basis, the metric $G$ becomes the identity and simultaneously diagonalises $L$, so that the principal curvatures are read off directly from the eigenvalues of $\boldsymbol{\Sigma}$.

\subsection{Shape Operator}
\label{subsec:shape_op}

The first and second fundamental forms introduced in the preceding subsections encode intrinsic and extrinsic curvature information, respectively, but they do so in a coordinate-dependent way: their matrix representations $G$ and $L$ change under reparametrisation, making it difficult to extract coordinate-free geometric invariants
directly from them. The \emph{shape operator}, also called the \emph{Weingarten map}, resolves this issue by amalgamating $G$ and $L$ into a single, self-adjoint linear endomorphism of the tangent space whose eigenvalues and trace are genuine geometric invariants, independent of any choice of coordinates or embedding \citep{do_carmo_differential_2016, ONeill2006, spivak1999comprehensive}.

\paragraph{Definition.}
Let $p \in \mathcal{M}$, and let $g_{p}$ and $\mathcal{I\!I}_{p}$ be the first and second fundamental forms at $p$, respectively. Because $g_{p}$ is non-degenerate (positive definite), for every fixed $\mathbf{u} \in T_{p}\mathcal{M}$ the map $\mathbf{v} \mapsto \mathcal{I\!I}_{p}(\mathbf{u}, \mathbf{v})$ is a linear functional on $T_{p}\mathcal{M}$, and by the Riesz representation theorem there exists a unique vector in $T_{p}\mathcal{M}$, denoted $\mathcal{S}_{p}(\mathbf{u})$, such that

\begin{equation}
	\mathcal{I\!I}_{p}(\mathbf{u}, \mathbf{v})
	\;=\;
	g_{p}\!\bigl(\mathcal{S}_{p}(\mathbf{u}),\, \mathbf{v}\bigr),
	\qquad
	\forall\, \mathbf{v} \in T_{p}\mathcal{M}.
	\label{eq:shape_op_def}
\end{equation}

The map $\mathcal{S}_{p} : T_{p}\mathcal{M} \to T_{p}\mathcal{M}$ defined by~\eqref{eq:shape_op_def} is called the \emph{shape operator} at $p$. It is linear and, because $\mathcal{I\!I}_{p}$ is symmetric, it is self-adjoint with respect to $g_{p}$:

\begin{equation}
	g_{p}\!\bigl(\mathcal{S}_{p}(\mathbf{u}),\, \mathbf{v}\bigr)
	\;=\;
	g_{p}\!\bigl(\mathbf{u},\, \mathcal{S}_{p}(\mathbf{v})\bigr),
	\qquad
	\forall\, \mathbf{u}, \mathbf{v} \in T_{p}\mathcal{M}.
	\label{eq:shape_op_selfadj}
\end{equation}

An equivalent characterisation, more amenable to computation, is obtained by differentiating the unit normal field $\mathbf{n}$:

\begin{equation}
	\mathcal{S}_{p}(\mathbf{u})
	\;=\;
	-\,\bigl(D_{\mathbf{u}}\,\mathbf{n}\bigr)^{\!\top},
	\label{eq:shape_op_dn}
\end{equation}

where $(\,\cdot\,)^{\!\top}$ denotes the orthogonal projection onto $T_{p}\mathcal{M}$.
Equation~\eqref{eq:shape_op_dn} is the \emph{Weingarten equation}: it states that the shape operator measures the rate of change of the unit normal as one moves along the manifold, projected back onto the tangent space. Intuitively, if the normal rotates rapidly as $p$ moves in the direction $\mathbf{u}$, then the manifold bends sharply in that direction, and $\mathcal{S}_{p}(\mathbf{u})$ is correspondingly large \citep{ONeill2006, needham2021visual}.

\paragraph{Matrix representation in local coordinates.}
In the coordinate basis $\{\partial_{1}, \ldots, \partial_{d}\}$ introduced in~\eqref{eq:coord_basis}, the shape operator is represented by the $d \times d$ matrix

\begin{equation}
	\mathcal{S}
	\;=\;
	G^{-1} L,
	\label{eq:shape_matrix}
\end{equation}

where $G = (g_{ij})$ is the Gram matrix of the first fundamental form and $L = (l_{ij})$ is the curvature matrix of the second fundamental form, both defined in Sections~\ref{subsubsec:first_ff} and~\ref{subsubsec:second_ff}. To verify~\eqref{eq:shape_matrix}, note that the defining relation~\eqref{eq:shape_op_def} in coordinates reads

\begin{equation}
	l_{ij}
	\;=\;
	\mathcal{I\!I}_{p}(\partial_{i}, \partial_{j})
	\;=\;
	g_{p}\!\bigl(\mathcal{S}(\partial_{i}),\, \partial_{j}\bigr)
	\;=\;
	\sum_{k=1}^{d} \mathcal{S}_{ki}\, g_{kj},
	\label{eq:shape_verify}
\end{equation}

which in matrix form is $L = G\mathcal{S}$, hence $\mathcal{S} = G^{-1}L$ \citep{do_carmo_differential_2016, DGApp}. Note that $\mathcal{S}$ is symmetric in the $g$-inner product but not necessarily symmetric as a plain matrix unless the coordinate basis is $g$-orthonormal.

\paragraph{Spectral decomposition and principal curvatures.}
Since $\mathcal{S}_{p}$ is self-adjoint with respect to the inner product $g_{p}$, the spectral theorem guarantees that it admits a complete set of real eigenvalues and $g_{p}$-orthogonal eigenvectors. The eigenvalue problem

\begin{equation}
	\mathcal{S}_{p}(\mathbf{e}_{i})
	\;=\;
	\kappa_{i}\,\mathbf{e}_{i},
	\qquad i = 1, \ldots, d,
	\label{eq:shape_eigen}
\end{equation}

is equivalent in coordinates to the generalised eigenproblem

\begin{equation}
	L\,\mathbf{v}_{i}
	\;=\;
	\kappa_{i}\, G\,\mathbf{v}_{i},
	\label{eq:gen_eigen}
\end{equation}

whose solutions $\kappa_{1} \geq \kappa_{2} \geq \cdots \geq \kappa_{d}$ are the \emph{principal curvatures} of $\mathcal{M}$ at $p$, and the corresponding unit eigenvectors
$\{\mathbf{e}_{1}, \ldots, \mathbf{e}_{d}\}$ are the \emph{principal curvature directions} \citep{ONeill2006, spivak1999comprehensive}. The principal curvature directions are $g_{p}$-orthonormal: $g_{p}(\mathbf{e}_{i}, \mathbf{e}_{j}) = \delta_{ij}$, so they provide a canonical, coordinate-free frame for $T_{p}\mathcal{M}$ that simultaneously diagonalises both fundamental forms:

\begin{equation}
	\bigl[g_{p}(\mathbf{e}_{i},\mathbf{e}_{j})\bigr]
	\;=\;
	I_{d},
	\qquad
	\bigl[\mathcal{I\!I}_{p}(\mathbf{e}_{i},\mathbf{e}_{j})\bigr]
	\;=\;
	\operatorname{diag}(\kappa_{1}, \ldots, \kappa_{d}).
	\label{eq:diag_forms}
\end{equation}

\paragraph{Geometric invariants.}
The two most important scalar invariants of $\mathcal{S}_{p}$ are its trace and determinant.
The \emph{mean curvature} is proportional to the trace:

\begin{equation}
	H(p)
	\;=\;
	\frac{1}{d}\operatorname{tr}\!\bigl(\mathcal{S}_{p}\bigr)
	\;=\;
	\frac{1}{d}\sum_{i=1}^{d}\kappa_{i}
	\;=\;
	\frac{1}{d}\operatorname{tr}(G^{-1}L),
	\label{eq:mean_curv_trace}
\end{equation}

and the \emph{Gaussian curvature} is the determinant:

\begin{equation}
	K(p)
	\;=\;
	\det\!\bigl(\mathcal{S}_{p}\bigr)
	\;=\;
	\prod_{i=1}^{d}\kappa_{i}
	\;=\;
	\frac{\det L}{\det G}.
	\label{eq:gauss_curv}
\end{equation}

Both $H(p)$ and $K(p)$ are invariant under rigid motions of $\mathbb{R}^{m}$ and under reparametrisation of $\mathcal{M}$. By the celebrated \emph{Theorema Egregium} of Gauss, the Gaussian curvature $K$ is in fact an intrinsic invariant, it can be computed from $g$ alone, without reference to the embedding, while the mean curvature $H$ is genuinely extrinsic and depends on how $\mathcal{M}$ sits in $\mathbb{R}^{m}$ \citep{do_carmo_differential_2016, spivak1999comprehensive}. It is precisely this extrinsic character of $H$ that makes it
sensitive to the bending of the data manifold relative to the ambient feature space, and hence a powerful indicator of geometric boundary structure in high-dimensional datasets.

\paragraph{Simplification in the PCA eigenvector basis.}
In the data-driven context of this paper, the tangent frame is estimated via the leading eigenvectors of the local sample covariance matrix $\boldsymbol{\Sigma} = W\operatorname{diag}(\mathbf{v})W^{\top}$, as described in Section~\ref{subsec:setup}.
Since $W$ is orthogonal and its columns estimate the principal curvature directions, working in this eigenvector basis has the important consequence of simultaneously setting $G = I_{m}$, because PCA produces an orthonormal frame, so that the shape operator~\eqref{eq:shape_matrix} simplifies to

\begin{equation}
	\mathcal{S}
	\;=\;
	G^{-1}L
	\;\xrightarrow{\;G\,=\,I_{m}\;}
	\;L,
	\label{eq:shape_simplified}
\end{equation}

and its trace, the mean curvature, reduces to

\begin{equation}
	H(p)
	\;=\;
	\frac{1}{d}\operatorname{tr}(L)
	\;=\;
	\frac{1}{d}\operatorname{tr}(\mathcal{S}).
	\label{eq:mean_curv_pca}
\end{equation}

This simplification is not merely notational: it is the geometric reason why the discrete estimator $\kappa_{i} \propto |\operatorname{tr}(-HH^{\top}\boldsymbol{\Sigma})|$ of \citet{mcbp2025} takes the specific form it does, and why the orthogonality of $W$, formally encoded as $W^{\top}W = I_{m}$, is the algebraic key that enables the closed-form reduction from $O(m^{4})$ to $O(m^{2})$ established in Theorem~\ref{thm:identity}.

\paragraph{Relevance in the proposed method.}
From the perspective of the algorithm developed in this paper, the shape operator plays three distinct roles. First, it provides the theoretical target that the discrete estimator $HH^{\top}$ approximates: the matrix $H$, whose columns are the element-wise squares and cross-products of the eigenvectors of $\boldsymbol{\Sigma}$, encodes a discretisation of $\mathcal{S}$ projected onto the local neighborhood. Second, the self-adjointness of $\mathcal{S}$, and its coordinate-free definition via~\eqref{eq:shape_op_def}, guarantees
that the mean curvature $H(p) = d^{-1}\operatorname{tr}(\mathcal{S})$ is a well-defined geometric quantity independent of the orientation of the eigenvector frame, so that the estimator is consistent across different points of the dataset even when the eigenvectors are not globally aligned. Third, the eigenvalues $\kappa_{i}$ of $\mathcal{S}$ provide a fine-grained description of local geometry that, beyond the scalar mean curvature used in \citet{mcbp2025}, could in principle be used to construct richer curvature-based features, such as the full curvature spectrum or the Gaussian curvature $K = \prod_{i}\kappa_{i}$, for downstream machine learning tasks such as anomaly detection, semi-supervised classification, or geometric graph construction \citep{cheng2021weingarten, Curvgad}.

Figure~\ref{fig:shape} provides a geometric illustration of the shape operator, showing how the rate of change of the unit normal field $\mathbf{n}$ encodes the local bending of a curved surface. At each point $p \in \mathcal{M}$, the shape operator $\mathcal{S}_{p}$ acts on a tangent vector $\mathbf{v} \in T_{p}\mathcal{M}$ and returns, via the Weingarten
equation~\eqref{eq:shape_op_dn}, the component of $D_{\mathbf{v}}\,\mathbf{n}$ projected back onto the tangent space, that is, the infinitesimal rotation of the normal as one moves from
$p$ in the direction $\mathbf{v}$. Crucially, this rate of change is \emph{anisotropic}: the normal rotates at different speeds depending on the chosen tangent direction, so $\mathcal{S}_{p}$ is not a scalar but a self-adjoint linear map whose eigenvectors identify the directions of extremal bending, the principal curvature directions, and whose eigenvalues $\kappa_{i}$ measure the corresponding rates of rotation. A tangent direction aligned with a principal curvature direction $\mathbf{e}_{i}$ yields $\mathcal{S}_{p}(\mathbf{e}_{i}) =
\kappa_{i}\,\mathbf{e}_{i}$, so the normal rotates purely in that direction at rate $\kappa_{i}$; an arbitrary tangent direction combines these extremal responses through the spectral decomposition~\eqref{eq:diag_forms}.

\begin{figure}
	\centering
	\includegraphics[scale=0.6]{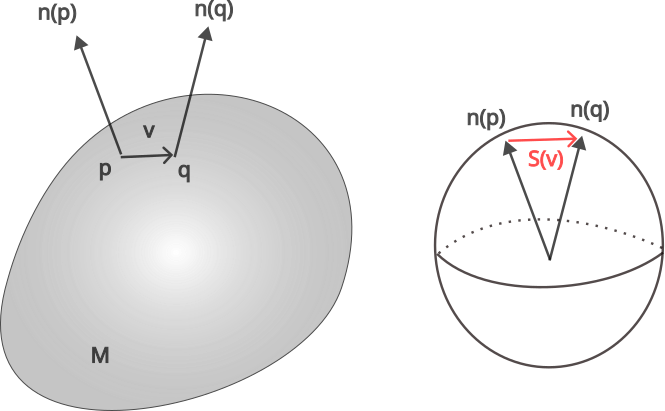}
	\caption{Geometric illustration of the shape operator $\mathcal{S}_{p}$ on a curved surface $\mathcal{M}$. At the base point $p$, the unit normal $\mathbf{n}(p)$ is orthogonal to the tangent plane $T_{p}\mathcal{M}$.	As one moves from $p$ in the tangent direction $\mathbf{v} \in T_{p}\mathcal{M}$, the normal field rotates at a rate that depends on $\mathbf{v}$: the shape operator returns the tangential component of this
    rate of change, $\mathcal{S}_{p}(\mathbf{v}) = -(D_{\mathbf{v}}\,\mathbf{n})^{\!\top}$.
	Directions along which $\mathbf{n}$ rotates fastest and	slowest are the principal curvature directions $\mathbf{e}_{1}$ and $\mathbf{e}_{2}$, with principal curvatures $\kappa_{1} \geq	\kappa_{2}$; their arithmetic mean $H = (\kappa_{1} + \kappa_{2})/2$ is the mean curvature at $p$.}
	\label{fig:shape}
\end{figure}

\section{Local Shape Operator Estimation}
\label{sec:method}

Let $\mathcal{X} = \{\mathbf{x}_{1}, \mathbf{x}_{2}, \ldots, \mathbf{x}_{n}\} \subset \mathbb{R}^{D}$ be a finite dataset sampled from an unknown smooth $d$-dimensional manifold
$\mathcal{M} \hookrightarrow \mathbb{R}^{D}$, with $d \ll D$. Our goal is to estimate, at each point $\mathbf{x}_{i} \in \mathcal{X}$, a discrete approximation of the shape operator $\mathcal{S}_{i}$ and, from it, the local mean curvature $\mathcal{K}_{i}$ — using only
the geometry of a small $k$-nearest-neighbour patch around $\mathbf{x}_{i}$. The procedure consists of five steps, described below.

\paragraph{Step 1: Local neighborhood Construction.}

For each point $\mathbf{x}_{i}$, we identify its $k$-nearest neighbours under the Euclidean metric:

\begin{equation}
	\mathcal{N}_{i}
	\;=\;
	\mathrm{kNN}(\mathbf{x}_{i},\, k)
	\;=\;
	\{\mathbf{x}_{i_{1}}, \ldots, \mathbf{x}_{i_{k}}\}
	\;\subset\; \mathcal{X}.
	\label{eq:knn}
\end{equation}

Under the manifold hypothesis, the set $\mathcal{N}_{i}$ provides a discrete approximation of a local coordinate chart around $\mathbf{x}_{i}$: for $k$ sufficiently large relative to the local reach of $\mathcal{M}$, the convex hull of $\mathcal{N}_{i}$ is contained in a thin tubular neighborhood of $\mathcal{M}$, and the geodesic distances within $\mathcal{N}_{i}$ are well approximated by Euclidean distances \citep{fefferman2016testing, tenenbaum2000global}. The parameter $k$ governs a fundamental bias–variance trade-off: small
$k$ yields sharper local estimates but increases sensitivity to noise, while large $k$ improves statistical stability at the cost of smoothing out fine geometric features.

\paragraph{Step 2: Local Covariance and Metric Approximation.}

Let $\bar{\mathbf{x}}_{i} = k^{-1}\sum_{\mathbf{x}_{j} \in \mathcal{N}_{i}} \mathbf{x}_{j}$ be the local centroid. We compute the sample covariance matrix of the neighborhood:

\begin{equation}
	\mathbf{C}_{i}
	\;=\;
	\frac{1}{k-1}
	\sum_{\mathbf{x}_{j} \in \mathcal{N}_{i}}
	\bigl(\mathbf{x}_{j} - \bar{\mathbf{x}}_{i}\bigr)
	\bigl(\mathbf{x}_{j} - \bar{\mathbf{x}}_{i}\bigr)^{\!\top}
	\;\in\; \mathbb{R}^{D \times D}.
	\label{eq:cov}
\end{equation}

The matrix $\mathbf{C}_{i}$ captures the anisotropic dispersion of the data within $\mathcal{N}_{i}$ and provides a first-order approximation of the local geometry of $\mathcal{M}$ near $\mathbf{x}_{i}$: its leading eigenvectors span an estimate of the tangent space $T_{\mathbf{x}_{i}}\mathcal{M}$, while its eigenvalues reflect the local extent of the manifold along each direction \citep{Singer2012}. Following a Riemannian interpretation in which the metric tensor is adapted to the local data distribution, we associate the inverse
covariance with the local metric:

\begin{equation}
	\mathbf{g}_{i}
	\;\approx\;
	\mathbf{C}_{i}^{-1},
	\label{eq:metric_approx}
\end{equation}

which corresponds to a Mahalanobis-type metric that contracts distances along high-variance directions and expands them along low-variance ones, thereby de-emphasising ambient dimensions irrelevant to the local manifold structure \citep{do_carmo_differential_2016}.
When $\mathbf{C}_{i}$ is rank-deficient, as is generically the case when $k \leq D$, the inverse in~\eqref{eq:metric_approx} is interpreted as a Moore–Penrose pseudoinverse restricted to the column space of $\mathbf{C}_{i}$.

\vspace{1cm}

\paragraph{Step 3: Local Frame and Second-Order Structure.}

Let

\begin{equation}
	\mathbf{C}_{i}
	\;=\;
	\mathbf{W}_{i}\,\boldsymbol{\Lambda}_{i}\,\mathbf{W}_{i}^{\!\top}
	\label{eq:eigen}
\end{equation}

be the eigendecomposition of $\mathbf{C}_{i}$, where $\boldsymbol{\Lambda}_{i} = \operatorname{diag}(\lambda_{1}^{(i)}, \ldots, \lambda_{D}^{(i)})$ with $\lambda_{1}^{(i)} \geq \cdots \geq \lambda_{D}^{(i)} \geq 0$, and $\mathbf{W}_{i} = [\mathbf{w}_{1}^{(i)} \mid \cdots \mid \mathbf{w}_{D}^{(i)}]$ is the orthogonal matrix of corresponding eigenvectors.
The leading $d$ columns of $\mathbf{W}_{i}$, those associated with the $d$ largest eigenvalues, span the estimated tangent space $\widehat{T}_{\mathbf{x}_{i}}\mathcal{M}$, while the remaining $D - d$ columns span the estimated normal space $\widehat{N}_{\mathbf{x}_{i}}\mathcal{M}$, consistent with the decomposition $\mathbb{R}^{D} = T_{\mathbf{x}_{i}}\mathcal{M} \oplus N_{\mathbf{x}_{i}}\mathcal{M}$ established in
Section~\ref{subsec:tangent_spaces}.

To capture second-order geometric information, that is, the local curvature of $\mathcal{M}$, we construct a feature matrix $\mathbf{H}_{i} \in \mathbb{R}^{D \times p}$, with
$p = D + D(D-1)/2$, whose columns are:

\begin{itemize}
	\item the \emph{quadratic terms}
	$\bigl(\mathbf{w}_{j}^{(i)}\bigr)^{\circ 2}$,
	for $j = 1, \ldots, D$, encoding the self-interaction of each eigenvector direction; and
	\item the \emph{cross terms} $\mathbf{w}_{j}^{(i)} \circ \mathbf{w}_{\ell}^{(i)}$,
	for $1 \leq j < \ell \leq D$, encoding the interaction between pairs of eigenvector directions,
\end{itemize}

where $\circ$ denotes the Hadamard (element-wise) product. The matrix $\mathbf{H}_{i}$ constitutes a local quadratic expansion of the eigenvector frame, and its columns span a feature space that is sensitive to deviations of $\mathcal{M}$ from its tangent hyperplane.
The second fundamental form is then approximated by the Gram matrix of this feature space:

\begin{equation}
	\widehat{\mathcal{I\!I}}_{i}
	\;=\;
	\mathbf{H}_{i}\mathbf{H}_{i}^{\!\top}
	\;\in\; \mathbb{R}^{D \times D}.
	\label{eq:second_ff_approx}
\end{equation}

This construction provides a data-driven estimate of the local curvature tensor by measuring, in a least-squares sense, the degree to which the neighborhood $\mathcal{N}_{i}$ departs from the linear (flat) approximation provided by $\widehat{T}_{\mathbf{x}_{i}} \mathcal{M}$ \citep{mcbp2025, cheng2021weingarten}.

\paragraph{Step 4: Shape Operator Estimation.}

Following the coordinate representation established in Section~\ref{subsec:shape_op}, the shape operator is the linear map $\mathcal{S}_{i} : T_{\mathbf{x}_{i}}\mathcal{M} \to
T_{\mathbf{x}_{i}}\mathcal{M}$ given by $\mathcal{S} = G^{-1}L$ in local coordinates — that is, the composition of the curvature matrix with the inverse metric. Substituting the approximations~\eqref{eq:metric_approx} and~\eqref{eq:second_ff_approx}, we obtain the discrete estimator:

\begin{equation}
	\widehat{\mathcal{S}}_{i}
	\;=\;
	-\,\widehat{\mathcal{I\!I}}_{i}\;\mathbf{g}_{i}^{-1}
	\;=\;
	-\,\mathbf{H}_{i}\mathbf{H}_{i}^{\!\top}\,\mathbf{C}_{i},
	\label{eq:shape_est}
\end{equation}

where the negative sign follows the convention established in~\eqref{eq:second_ff_def}, ensuring that positive eigenvalues correspond to convex bending toward the normal. The estimator~\eqref{eq:shape_est} is computable entirely from the eigendecomposition of $\mathbf{C}_{i}$ and requires no explicit knowledge of the embedding map or the unit normal field.

\paragraph{Step 5: Mean Curvature Estimation.}

The shape operator $\widehat{\mathcal{S}}_{i}$ encodes the full local curvature structure of $\mathcal{M}$ near $\mathbf{x}_{i}$: its eigenvalues $\widehat{\kappa}_{1}^{(i)} \geq \cdots \geq \widehat{\kappa}_{d}^{(i)}$ approximate the principal curvatures, and its eigenvectors approximate the principal curvature directions. The trace of $\widehat{\mathcal{S}}_{i}$, which is invariant under change of basis and equals the sum of principal curvatures, yields the mean curvature estimator:

\begin{equation}
	\mathcal{K}_{i}
	\;=\;
	\bigl|\operatorname{tr}\!\bigl(\widehat{\mathcal{S}}_{i}\bigr)\bigr|
	\;=\;
	\bigl|\operatorname{tr}\!\bigl(
	-\mathbf{H}_{i}\mathbf{H}_{i}^{\!\top}\mathbf{C}_{i}
	\bigr)\bigr|.
	\label{eq:mean_curv_est}
\end{equation}

The absolute value accommodates the sign ambiguity of the unit normal in high co-dimension, ensuring that $\mathcal{K}_{i} \geq 0$ regardless of the orientation of  $\widehat{N}_{\mathbf{x}_{i}}\mathcal{M}$. This scalar quantity provides a coordinate-free measure of local geometric complexity: high values of $\mathcal{K}_{i}$ identify regions where the manifold bends sharply and local linear approximations are least reliable, while low values correspond to near-flat regions where the tangent space provides an accurate
description of the local structure. As established in Theorem~\ref{thm:identity}, the naive evaluation of~\eqref{eq:mean_curv_est}, which requires forming the $D \times p$ matrix $\mathbf{H}_{i}$ with $p = O(D^{2})$ columns and computing the product $\mathbf{H}_{i}\mathbf{H}_{i}^{\!\top}$ at cost $O(D^{4})$, can be replaced by the closed-form expression

\begin{equation}
	\mathcal{K}_{i}
	\;=\;
	\left|
	\frac{1}{2}\,\boldsymbol{\lambda}_{i}^{\!\top}
	\bigl(C_{i}^{\odot 2}\,\mathbf{1}_{D}\bigr)
	+
	\frac{1}{2}\sum_{s=1}^{D}\lambda_{s}^{(i)}
	\right|,
	\qquad
	C_{i} = \mathbf{W}_{i}^{\!\top}\mathbf{W}_{i}^{(2)},
	\label{eq:mean_curv_fast}
\end{equation}

at a total cost of $O(D^{2})$ after the eigendecomposition, where $\mathbf{W}_{i}^{(2)}$ denotes the element-wise square of $\mathbf{W}_{i}$, $C_{i}^{\odot 2}$ the element-wise square of $C_{i}$, and $\boldsymbol{\lambda}_{i} = (\lambda_{1}^{(i)}, \ldots,
\lambda_{D}^{(i)})^{\!\top}$ the vector of eigenvalues of $\mathbf{C}_{i}$.

\section{The Proposed Algebraic Identity for Efficient Mean Curvature Computation}
\label{sec:identity}

In this section we derive the central theoretical contribution of this paper: an exact closed-form expression for the mean curvature estimator that replaces an $O(m^{4})$ tensor contraction with an $O(m^{2})$ matrix product.  The derivation proceeds in four self-contained steps. We begin by restating the original formulation, then expand the key
quadratic form algebraically, exploit the orthogonality of the eigenvector matrix to collapse the expression, and finally reformulate the result as a standard matrix operation whose computational cost is dominated by a single matrix multiplication of size $m\times m$.

\subsection{Problem Setup and Original Formulation}
\label{subsec:setup}

Let $\mathcal{X} = \{\mathbf{x}_{1},\dots,\mathbf{x}_{n}\} \subset \mathbb{R}^{m}$ be a dataset of $n$ points in an $m$-dimensional ambient space.  For each point $\mathbf{x}_{i}$, let $\mathcal{N}(i) = \{\mathbf{x}_{i_{1}},\dots,\mathbf{x}_{i_{k}}\}$ denote its $k$ nearest neighbours under the Euclidean metric, and let

\begin{equation}
	\boldsymbol{\Sigma}_{i}
	\;=\;
	\frac{1}{k-1}
	\sum_{j=1}^{k}
	\bigl(\mathbf{x}_{i_{j}} - \bar{\mathbf{x}}_{i}\bigr)
	\bigl(\mathbf{x}_{i_{j}} - \bar{\mathbf{x}}_{i}\bigr)^{\!\top}
	\;\in\; \mathbb{R}^{m \times m}
	\label{eq:cov}
\end{equation}

be the sample covariance matrix of the local neighborhood, where $\bar{\mathbf{x}}_{i} = k^{-1}\sum_{j=1}^{k}\mathbf{x}_{i_{j}}$ is the local centroid.  Because $\boldsymbol{\Sigma}_{i}$ is real symmetric and positive semi-definite, it admits the spectral
decomposition

\begin{equation}
	\boldsymbol{\Sigma}_{i}
	\;=\;
	W \operatorname{diag}(v_{1},\dots,v_{m}) W^{\!\top},
	\label{eq:spectral}
\end{equation}

where $W = [\mathbf{w}_{1}\mid\cdots\mid\mathbf{w}_{m}] \in \mathbb{R}^{m \times m}$ is the orthogonal matrix whose $l$-th column $\mathbf{w}_{l}\in\mathbb{R}^{m}$ is the eigenvector associated with eigenvalue $v_{l}$, ordered so that $v_{1}\geq v_{2}\geq\cdots\geq
v_{m}\geq 0$.  The orthogonality condition reads

\begin{equation}
	W^{\!\top} W \;=\; W W^{\!\top} \;=\; I_{m},
	\label{eq:orth}
\end{equation}

where $I_{m}$ is the $m\times m$ identity matrix. Following the discrete shape-operator approximation introduced for the MCBP algorithm \citep{mcbp2025}, the pointwise mean curvature estimator at $\mathbf{x}_{i}$ is defined as

\begin{equation}
	\kappa_{i}
	\;=\;
	\bigl|\operatorname{tr}\!\bigl(-H H^{\!\top} \boldsymbol{\Sigma}_{i}\bigr)\bigr|,
	\label{eq:curvature}
\end{equation}

where $H \in \mathbb{R}^{m \times (m + n_{c})}$, with $n_{c} = \binom{m}{2} = m(m-1)/2$, is the feature matrix constructed by concatenating the element-wise squares and pairwise products of the eigenvectors:

\begin{equation}
	H
	\;=\;
	\Bigl[
	\underbrace{W^{(2)}}_{\text{squared columns}}
	\;\Big|\;
	\underbrace{W^{(\times)}}_{\text{cross-product columns}}
	\Bigr].
	\label{eq:H}
\end{equation}

Specifically, the $l$-th column of $W^{(2)}\in\mathbb{R}^{m\times m}$ is $\mathbf{w}_{l}\odot\mathbf{w}_{l}$ (the Hadamard square of the $l$-th eigenvector), while the $(j,l)$-th column of $W^{(\times)}\in\mathbb{R}^{m\times n_{c}}$, for all pairs $1\leq j < l \leq m$, is $\mathbf{w}_{j}\odot\mathbf{w}_{l}$.  Here, $\odot$ denotes the Hadamard (element-wise) product.

Evaluating \eqref{eq:curvature} directly requires forming $H H^{\!\top} \in \mathbb{R}^{m\times m}$ from a matrix with $m + n_{c} \approx m^{2}/2$ columns, followed by a matrix product with $\boldsymbol{\Sigma}_{i}$ and a trace operation.  The dominant cost is
the formation of $H H^{\!\top}$, which involves summing $m + n_{c}$ rank-one outer products of $m$-dimensional vectors, yielding a total complexity of $O(m^{2} \cdot (m + n_{c})) = O(m^{4})$ per point.  For large $m$ this renders direct evaluation intractable.

\subsection{Expansion of \texorpdfstring{$HH^\top$}{HHt}}
\label{subsec:expand}

We begin by decomposing the quadratic form $H H^{\!\top}$ into its two structural components.  From~\eqref{eq:H} we have

\begin{equation}
	H H^{\!\top}
	\;=\;
	W^{(2)}\bigl(W^{(2)}\bigr)^{\!\top}
	+
	W^{(\times)}\bigl(W^{(\times)}\bigr)^{\!\top}.
	\label{eq:HHt_decomp}
\end{equation}

\paragraph{The squared-columns term.}
The $(i,j)$-th entry of the first summand is

\begin{equation}
	\bigl[W^{(2)}\bigl(W^{(2)}\bigr)^{\!\top}\bigr]_{ij}
	\;=\;
	\sum_{l=1}^{m} W_{il}^{2}\,W_{jl}^{2},
	\label{eq:sq_term}
\end{equation}

which is simply the inner product between row $i$ and row $j$ of $W^{(2)}$.

\paragraph{The cross-product term.}
The $(i,j)$-th entry of the second summand sums over all index pairs $j' < l$:

\begin{equation}
	\bigl[W^{(\times)}\bigl(W^{(\times)}\bigr)^{\!\top}\bigr]_{ij}
	\;=\;
	\sum_{\substack{j',l=1\\j'<l}}^{m}
	W_{ij'} W_{il}\cdot W_{jj'} W_{jl}.
	\label{eq:cross_raw}
\end{equation}

To simplify~\eqref{eq:cross_raw} we use the algebraic identity $2\sum_{j'<l} a_{j'} a_{l} = \bigl(\sum_{l} a_{l}\bigr)^{2} -\sum_{l} a_{l}^{2}$ applied to the products $a_{l} =
W_{il}W_{jl}$:

\begin{align}
	\bigl[W^{(\times)}\bigl(W^{(\times)}\bigr)^{\!\top}\bigr]_{ij}
	&=
	\frac{1}{2}
	\left[
	\Bigl(\sum_{l=1}^{m} W_{il}\,W_{jl}\Bigr)^{\!2}
	-
	\sum_{l=1}^{m} W_{il}^{2}\,W_{jl}^{2}
	\right] \notag \\[4pt]
	&=
	\frac{1}{2}
	\Bigl[\bigl(WW^{\!\top}\bigr)_{ij}^{2}
	-
	\bigl[W^{(2)}\bigl(W^{(2)}\bigr)^{\!\top}\bigr]_{ij}\Bigr].
	\label{eq:cross_simplified}
\end{align}

Substituting \eqref{eq:sq_term} and \eqref{eq:cross_simplified} into \eqref{eq:HHt_decomp} gives

\begin{align}
	\bigl[H H^{\!\top}\bigr]_{ij}
	&=
	\sum_{l=1}^{m} W_{il}^{2}W_{jl}^{2}
	+
	\frac{1}{2}
	\Bigl[
	\bigl(WW^{\!\top}\bigr)_{ij}^{2}
	- \sum_{l=1}^{m} W_{il}^{2}W_{jl}^{2}
	\Bigr] \notag \\[4pt]
	&=
	\frac{1}{2}\sum_{l=1}^{m} W_{il}^{2}W_{jl}^{2}
	+
	\frac{1}{2}\bigl(WW^{\!\top}\bigr)_{ij}^{2}.
	\label{eq:HHt_entry}
\end{align}

\subsection{Exploiting Eigenvector Orthogonality}
\label{subsec:orthogonality}

The crucial simplification arises from the orthogonality of $W$.  From~\eqref{eq:orth} we have

\begin{equation}
	\bigl(WW^{\!\top}\bigr)_{ij}
	\;=\;
	\sum_{l=1}^{m} W_{il}\,W_{jl}
	\;=\;
	\delta_{ij},
	\label{eq:delta}
\end{equation}

where $\delta_{ij}$ is the Kronecker delta.  Therefore $\bigl(WW^{\!\top}\bigr)_{ij}^{2} = \delta_{ij}^{2} = \delta_{ij}$, and~\eqref{eq:HHt_entry} reduces to the remarkably simple expression

\begin{equation}
	\boxed{
		H H^{\!\top}
		\;=\;
		\frac{1}{2}\,W^{(2)}\bigl(W^{(2)}\bigr)^{\!\top}
		+
		\frac{1}{2}\,I_{m}.
	}
	\label{eq:HHt_final}
\end{equation}

Equation~\eqref{eq:HHt_final} is the key structural result: the $m\times m$ matrix $H H^{\!\top}$, which implicitly encodes $m + n_{c} \approx m^{2}/2$ feature vectors, equals a linear combination of the identity and the Gram matrix of the element-wise-squared eigenvectors, both of size $m\times m$ and independent of $n_{c}$.

\subsection{Closed-Form Expression for the Curvature Estimator}
\label{subsec:closed_form}

Substituting~\eqref{eq:HHt_final} into~\eqref{eq:curvature} and using linearity of the trace:

\begin{equation}
	\operatorname{tr}\!\bigl(H H^{\!\top} \boldsymbol{\Sigma}_{i}\bigr)
	\;=\;
	\frac{1}{2}\,
	\operatorname{tr}\!\Bigl(W^{(2)}\bigl(W^{(2)}\bigr)^{\!\top}
	\boldsymbol{\Sigma}_{i}\Bigr)
	+
	\frac{1}{2}\,\operatorname{tr}(\boldsymbol{\Sigma}_{i}).
	\label{eq:trace_split}
\end{equation}

The second term is immediate from the spectral decomposition~\eqref{eq:spectral}:

\begin{equation}
	\operatorname{tr}(\boldsymbol{\Sigma}_{i})
	\;=\;
	\sum_{s=1}^{m} v_{s}.
	\label{eq:trace_cov}
\end{equation}

For the first term we apply the cyclic property of the trace, $\operatorname{tr}(ABC) = \operatorname{tr}(CAB)$, to obtain

\begin{equation}
	\operatorname{tr}\!\Bigl(W^{(2)}\bigl(W^{(2)}\bigr)^{\!\top}
	\boldsymbol{\Sigma}_{i}\Bigr)
	\;=\;
	\operatorname{tr}\!\Bigl(\bigl(W^{(2)}\bigr)^{\!\top}
	\boldsymbol{\Sigma}_{i}\,W^{(2)}\Bigr).
	\label{eq:cyclic}
\end{equation}

Inserting the spectral decomposition $\boldsymbol{\Sigma}_{i} = W \operatorname{diag}(\mathbf{v}) W^{\!\top}$ into~\eqref{eq:cyclic} yields

\begin{equation}
	\operatorname{tr}\!\Bigl(\bigl(W^{(2)}\bigr)^{\!\top}
	W\operatorname{diag}(\mathbf{v})W^{\!\top}W^{(2)}\Bigr).
	\label{eq:insert_spectral}
\end{equation}

We now define the $m \times m$ matrix

\begin{equation}
	C \;=\; W^{\!\top} W^{(2)},
	\qquad
	C_{sl} \;=\; \sum_{i=1}^{m} W_{is}\,W_{il}^{2},
	\label{eq:C_def}
\end{equation}

so that~\eqref{eq:insert_spectral} becomes

\begin{equation}
	\operatorname{tr}\!\bigl(C^{\!\top}\operatorname{diag}(\mathbf{v})\,C\bigr)
	\;=\;
	\sum_{s=1}^{m} v_{s}
	\Bigl(\sum_{l=1}^{m} C_{sl}^{2}\Bigr)
	\;=\;
	\mathbf{v}^{\!\top}\bigl(C^{\odot 2}\,\mathbf{1}_{m}\bigr),
	\label{eq:trace_diag}
\end{equation}

where $C^{\odot 2}$ denotes the Hadamard (element-wise) square of $C$ and $\mathbf{1}_{m}\in\mathbb{R}^{m}$ is the all-ones vector, so that $C^{\odot 2}\,\mathbf{1}_{m}$ collects the row sums of $C^{\odot 2}$. Combining~\eqref{eq:trace_split},~\eqref{eq:trace_cov}, and~\eqref{eq:trace_diag}, we arrive at the main result.

\begin{theorem}[Closed-form mean curvature identity]
	\label{thm:identity}
	Let $W\in\mathbb{R}^{m\times m}$ be the orthogonal eigenvector matrix of the local covariance $\boldsymbol{\Sigma}_{i}$, let $\mathbf{v}\in\mathbb{R}^{m}$ be the corresponding eigenvalues (in decreasing order), and let $C = W^{\!\top}W^{(2)}\in\mathbb{R}^{m\times m}$, where $W^{(2)}_{il} = W_{il}^{2}$. Then
	
	\begin{equation}
		\operatorname{tr}\!\bigl(H H^{\!\top} \boldsymbol{\Sigma}_{i}\bigr)
		\;=\;
		\frac{1}{2}\,\mathbf{v}^{\!\top}
		\bigl(C^{\odot 2}\,\mathbf{1}_{m}\bigr)
		+
		\frac{1}{2}\,\mathbf{1}_{m}^{\!\top}\mathbf{v},
		\label{eq:main_identity}
	\end{equation}
	
	and consequently the mean curvature estimator \eqref{eq:curvature} reads
	
	\begin{equation}
		\kappa_{i}
		\;=\;
		\Biggl|
		\frac{1}{2}\,\mathbf{v}^{\!\top}
		\bigl(C^{\odot 2}\,\mathbf{1}_{m}\bigr)
		+
		\frac{1}{2}\sum_{s=1}^{m}v_{s}
		\Biggr|.
		\label{eq:kappa_final}
	\end{equation}
\end{theorem}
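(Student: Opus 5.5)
The plan is to follow the three-stage reduction already prepared in Sections~\ref{subsec:expand}--\ref{subsec:closed_form}: first collapse $HH^{\top}$ into an $m\times m$ expression, then push the spectral decomposition through the trace, and finally read off the diagonal structure. Concretely, I start from the block decomposition \eqref{eq:HHt_decomp}. The squared-column block is $W^{(2)}(W^{(2)})^{\top}$ as in \eqref{eq:sq_term}. For the cross-product block I apply, entrywise, the identity $2\sum_{j'<l}a_{j'}a_l=(\sum_l a_l)^2-\sum_l a_l^2$ with $a_l=W_{il}W_{jl}$. This gives \eqref{eq:cross_simplified}, and summing the two blocks yields \eqref{eq:HHt_entry}.

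The key step is the use of orthogonality \eqref{eq:orth}. Since $(WW^{\top})_{ij}=\delta_{ij}$ and $\delta_{ij}^2=\delta_{ij}$, the Hadamard square of $WW^{\top}$ is again $I_m$, and this produces the boxed identity \eqref{eq:HHt_final}. This is the only place where the structure of $W$ enters the $HH^{\top}$ computation, and I expect it to be the main point needing care. Two details matter here. First, it uses $WW^{\top}=I_m$ (row orthonormality), not merely $W^{\top}W=I_m$; for a square matrix the two are equivalent, so I will state this explicitly. Second, the sign and the absolute value in \eqref{eq:curvature} do not interact with this step: $\operatorname{tr}(-HH^{\top}\boldsymbol{\Sigma}_i)=-\operatorname{tr}(HH^{\top}\boldsymbol{\Sigma}_i)$, so the absolute value is unchanged.

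Next I apply linearity of the trace to get \eqref{eq:trace_split}, using $\operatorname{tr}(\boldsymbol{\Sigma}_i)=\mathbf{1}_m^{\top}\mathbf{v}$ from \eqref{eq:spectral}. For the remaining term I use cyclicity to rewrite it as $\operatorname{tr}((W^{(2)})^{\top}\boldsymbol{\Sigma}_iW^{(2)})$. Substituting $\boldsymbol{\Sigma}_i=W\operatorname{diag}(\mathbf{v})W^{\top}$ then gives $\operatorname{tr}(C^{\top}\operatorname{diag}(\mathbf{v})C)$ with $C=W^{\top}W^{(2)}$. Expanding entrywise,
\begin{equation*}
\operatorname{tr}\bigl(C^{\top}\operatorname{diag}(\mathbf{v})C\bigr)=\sum_{l}\sum_{s}v_sC_{sl}^2=\mathbf{v}^{\top}\bigl(C^{\odot 2}\mathbf{1}_m\bigr),
\end{equation*}
which is \eqref{eq:trace_diag}. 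Combining these gives \eqref{eq:main_identity}, and negating and taking absolute values gives \eqref{eq:kappa_final}.

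Beyond the identity itself, I would confirm the cost claim. Forming $W^{(2)}$ costs $O(m^2)$, the product $C=W^{\top}W^{(2)}$ is a single $m\times m$ multiplication, and the row sums of $C^{\odot 2}$ together with the inner product with $\mathbf{v}$ cost $O(m^2)$.
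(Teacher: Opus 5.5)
Your proposal is correct and follows the paper's route step for step. You collapse the cross-product block with the pair-sum identity, use $WW^{\top}=I_m$ to reduce it to $\tfrac12 I_m$, and then combine cyclicity of the trace with the spectral decomposition to get $\operatorname{tr}\bigl(C^{\top}\operatorname{diag}(\mathbf{v})C\bigr)=\mathbf{v}^{\top}\bigl(C^{\odot 2}\mathbf{1}_m\bigr)$; your added remarks on row versus column orthonormality and on the sign inside the absolute value are sound details the paper leaves implicit.
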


\begin{proof}
	The proof follows directly from equations~\eqref{eq:HHt_final} through~\eqref{eq:trace_diag}, which are derived in Sections~\ref{subsec:expand}–\ref{subsec:closed_form}.	The only non-trivial step is~\eqref{eq:HHt_final}, which relies solely on the orthogonality identity $WW^{\!\top}=I_{m}$ established	in~\eqref{eq:orth}.
\end{proof}

\subsection{Complexity Analysis: From \texorpdfstring{$O(m^4)$}{O(m4)}
	to \texorpdfstring{$O(m^2)$}{O(m2)}}
\label{subsec:complexity}

We now give a precise account of the computational savings afforded by Theorem~\ref{thm:identity}.

\paragraph{Original formulation.}
Direct evaluation of~\eqref{eq:curvature} requires the following steps.

\begin{enumerate}
	
	\item \textbf{Forming $H$.}  The matrix $H \in \mathbb{R}^{m \times (m + n_{c})}$ has
	$n_{c} = m(m-1)/2$ cross-product columns, so its construction requires $O(m^{2})$ element-wise vector multiplications of length $m$, totalling $O(m^{3})$ operations.
	
	\item \textbf{Computing $H H^{\!\top}$.}  This is the product of an	$m\times(m+n_{c})$ matrix by its transpose.  With	$m + n_{c} \sim m^{2}/2$ columns, the cost is
	\begin{equation}
		O\!\bigl(m\cdot(m + n_{c})\cdot m\bigr)
		\;=\;
		O\!\bigl(m^{2}\cdot\tfrac{m^{2}}{2}\bigr)
		\;=\;
		O(m^{4}).
		\label{eq:cost_HHt}
	\end{equation}
	This is the dominant term.
	
	\item \textbf{Computing $H H^{\!\top}\boldsymbol{\Sigma}_{i}$ and taking the trace.}  Both operations cost $O(m^{2})$ once $H H^{\!\top}$ is available.
	
\end{enumerate}

The total cost per point is therefore $\boldsymbol{O(m^{4})}$.

\paragraph{Identity-based formulation.}
Evaluation of~\eqref{eq:kappa_final} via Theorem~\ref{thm:identity} decomposes into the following steps.

\begin{enumerate}
	
	\item \textbf{Eigendecomposition of $\boldsymbol{\Sigma}_{i}$.} Computing $W$ and $\mathbf{v}$ via a full symmetric eigensolver	costs $O(m^{3})$.
	
	\item \textbf{Computing $W^{(2)}$.}	Element-wise squaring of the $m\times m$ matrix $W$ costs $O(m^{2})$.
	
	\item \textbf{Computing $C = W^{\!\top}W^{(2)}$.} A standard $m\times m$ matrix multiplication costs $O(m^{3})$. However, this is a \emph{single}, dense matrix product on a modern BLAS routine, which is highly cache-efficient and incurs a	much smaller constant than the equivalent operations implicit in	forming $HH^{\!\top}$.
	
	\item \textbf{Computing $(C^{\odot 2})\mathbf{1}_{m}$.}	Element-wise squaring followed by row summation costs $O(m^{2})$.	
	
	\item \textbf{Dot product $\mathbf{v}^{\!\top}(C^{\odot 2}\mathbf{1}_{m})$ and scalar sum.}	Two inner-product operations, each costing $O(m)$.
	
\end{enumerate}

Steps 3 and 4, comprising the evaluation of the closed-form trace in \eqref{eq:main_identity}, have combined cost $O(m^{3}) + O(m^{2}) = O(m^{3})$.  In practice, however, the eigendecomposition in Step 1 already carries a cost of $O(m^{3})$ and dominates the total clock time once $m$ is large (see Section~\ref{sec:experiments}); Steps 2--5
are negligible by comparison.  The crucial observation is that the explicit construction of $H$ and the $O(m^4)$ computation of $HH^\top$ are \emph{completely eliminated}.

\paragraph{The source of the reduction.}
It is instructive to identify precisely \emph{why} the identity leads to such a dramatic reduction.  The matrix $H$ has $m + n_{c} = O(m^{2})$ columns, making any operation linear in the number of columns automatically $O(m^{2})$ in the leading factor.  The formation of
$HH^{\!\top}$ is quadratic in the number of columns, hence $O(m^{4})$.

The algebraic key is the orthogonality of $W$.  Without it, the expression $\bigl(WW^{\!\top}\bigr)_{ij}^{2}$ in \eqref{eq:HHt_entry} would be a non-trivial $m\times m$ matrix that encodes complex interactions among all $m$ eigenvectors, and no further
simplification would be possible.  Because $WW^{\!\top} = I_{m}$, however, the square $(WW^{\!\top})_{ij}^{2}$ collapses to $\delta_{ij}$ (\emph{cf}.~\eqref{eq:delta}), and the entire contribution of the $n_{c}$ cross-product columns to $HH^{\!\top}$ is captured by a
single diagonal correction, as shown in~\eqref{eq:HHt_final}.

Consequently, the effective rank of the problem is reduced from $m + n_{c}$ to $m$: instead of summing $O(m^{2})$ rank-one outer products of $m$-dimensional vectors, one needs only an $m\times m$ matrix product $W^{\!\top}W^{(2)}$ together with inexpensive element-wise operations.  This is the mechanism by which the complexity drops from $O(m^{4})$ to $O(m^{3})$ for the matrix product and to $O(m^{2})$ for the subsequent trace evaluation, yielding an overall per-point cost of $\boldsymbol{O(m^{3})}$ (dominated by the eigendecomposition) instead of $\boldsymbol{O(m^{4})}$.

Table~\ref{tab:complexity} summarises the computational costs of each step in both formulations.

\begin{table}[t]
	\centering
	\caption{Per-point computational complexity of the original and
		identity-based curvature estimators.  Here $m$ denotes the ambient
		dimension, $k$ the neighborhood size, and $n_{c} = m(m-1)/2$ the
		number of cross-product features.}
	\label{tab:complexity}
	\begin{tabular}{lcc}
		\hline
		\textbf{Step} & \textbf{Original} & \textbf{This work} \\
		\hline
		Covariance matrix $\boldsymbol{\Sigma}_{i}$    & $O(km^{2})$ & $O(km^{2})$ \\
		Eigendecomposition                              & $O(m^{3})$  & $O(m^{3})$  \\
		Forming $H$                                     & $O(m^{3})$  & ---         \\
		Computing $HH^{\!\top}$                         & $O(m^{4})$  & ---         \\
		Computing $W^{(2)}$                             & ---         & $O(m^{2})$  \\
		Computing $C = W^{\!\top}W^{(2)}$               & ---         & $O(m^{3})$  \\
		Trace evaluation                                & $O(m^{2})$  & $O(m^{2})$  \\
		\hline
		\textbf{Total (dominant)}                       & $\boldsymbol{O(m^{4})}$ & $\boldsymbol{O(m^{3})}$ \\
		\hline
	\end{tabular}
\end{table}

\begin{remark}[Numerical equivalence]
	\label{rem:numerical}
	Theorem~\ref{thm:identity} is an exact algebraic equality: no approximation is introduced.  The only source of discrepancy with respect to the original formulation is floating-point rounding, which in practice produces differences of order $10^{-12}$--$10^{-14}$ in double precision, well below any threshold of practical relevance.
\end{remark}

\begin{remark}[Further reduction via truncated SVD]
	\label{rem:fast_mode}
	For datasets in which the ambient dimension $m$ greatly exceeds the neighborhood size $k$, the local covariance matrix $\boldsymbol{\Sigma}_{i}$ has numerical rank at most $p = k-1\ll m$. In this regime the full eigendecomposition is wasteful because $m-p$
	eigenvalues are identically zero.  We exploit this low-rank structure to replace the $O(m^{3})$	eigendecomposition with a truncated SVD of the centered data matrix
	$X_{c}\in\mathbb{R}^{k\times m}$, which costs only $O(k^{2}m)$.	The contribution of the $m-p$ null-space eigenvectors to the curvature	estimator is then handled through an analytical approximation derived under the uniform (Haar) distribution over orthonormal null-space bases, reducing the total per-point cost to $O(k^{2}m + kmp^{2}) = O(k^{2}m)$ for fixed $k\ll m$, a further order-of-magnitude gain over the $O(m^{3})$ exact formulation.
\end{remark}

Algorithm~\ref{alg:emcc} summarises the proposed \textit{Mean Curvature Computation} (MeCuCo) procedure. The method estimates a scalar mean curvature score at each sample point using the closed-form identity of Theorem~\ref{thm:identity}, which replaces the $O(D^{4})$ computation of the original formulation with an $O(D^{2})$ matrix operation after the local eigendecomposition. The algorithm operates in two regimes, selected automatically by the
\texttt{mode} parameter: \textsc{Exact} mode applies the full eigendecomposition of the local covariance matrix via a LAPACK divide-and-conquer solver (driver \texttt{evr}/\texttt{evd}),
yielding exact curvature estimates at cost $O(D^{3})$ per point; \textsc{Fast} mode replaces the eigendecomposition with a truncated singular value decomposition of the $k \times D$ centred neighborhood matrix, exploiting the rank-$(k{-}1)$ structure of the local covariance and an analytical approximation of the null-space contribution, reducing the per-point cost to $O(k^{2}D + kDp^{2})$ with $p = k-1$. In both modes the algorithm is fully data-driven and requires neither an explicit manifold parametrisation nor prior knowledge of the intrinsic dimension $d$, making it directly applicable to high-dimensional datasets. In \textsc{Auto} mode (the default), the algorithm selects \textsc{Exact} when $D < 50$ and \textsc{Fast} otherwise, a threshold calibrated empirically to the crossover point at which the cost of the full eigendecomposition begins to dominate the per-point clock time (see Section~\ref{sec:experiments}).

\begin{algorithm}
	\caption{Efficient Mean Curvature Computation (MeCuCo)}
	\label{alg:emcc}
	\begin{algorithmic}[1]
		\Function{MeCuCo}{$X,\; k,\; \texttt{mode}$}
		\State \textbf{Input:}
		$X \in \mathbb{R}^{n \times D}$ (dataset),
		$k$ (number of neighbours),
		$\texttt{mode} \in \{\textsc{Auto}, \textsc{Exact},
		\textsc{Fast}\}$ (computation regime)
		\State \textbf{Output:}
		$\mathcal{K} \in \mathbb{R}^{n}_{\geq 0}$
		(mean curvature scores)
		\State $\texttt{mode} \gets
		\textsc{Exact}$ if $D < 50$, else $\textsc{Fast}$
		\hfill \Comment{Auto-select regime}
		\State $A \gets k\text{NN-Graph}(X, k)$
		\hfill \Comment{Construct $k$-nearest-neighbour graph}
		\For{$i = 1$ \textbf{to} $n$}
		\State $\mathcal{N}_{i} \gets$ neighbours of
		$\mathbf{x}_{i}$ in $A$
		\hfill \Comment{Step 1: local neighborhood}
		\If{$\texttt{mode} = \textsc{Exact}$}
		\State $\mathbf{C}_{i} \gets
		\frac{1}{k-1}
		\sum_{\mathbf{x}_{j} \in \mathcal{N}_{i}}
		(\mathbf{x}_{j} - \bar{\mathbf{x}}_{i})
		(\mathbf{x}_{j} - \bar{\mathbf{x}}_{i})^{\!\top}$
		\hfill \Comment{Step 2: local covariance, $O(kD^{2})$}
		\State $\boldsymbol{\lambda}_{i},\, \mathbf{W}_{i}
		\gets \mathrm{eigh}(\mathbf{C}_{i},\;
		\texttt{driver}{=}\texttt{evr}/\texttt{evd})$
		\hfill \Comment{Step 3: full eigendecomposition, $O(D^{3})$}
		\State $C_{i} \gets
		\mathbf{W}_{i}^{\!\top}\,\mathbf{W}_{i}^{\odot 2}$
		\hfill \Comment{$C_{i}[s,\ell]=
			\sum_{a}W_{as}W_{a\ell}^{2}$,\; $O(D^{2})$}
		\State $\mathcal{K}_{i} \gets
		\left|\tfrac{1}{2}\,\boldsymbol{\lambda}_{i}^{\!\top}
		(C_{i}^{\odot 2}\,\mathbf{1}_{D})
		+ \tfrac{1}{2}\,\mathbf{1}_{D}^{\!\top}
		\boldsymbol{\lambda}_{i}\right|$
		\hfill \Comment{Theorem~\ref{thm:identity}, $O(D^{2})$}
		\ElsIf{$\texttt{mode} = \textsc{Fast}$}
		\State $\bar{\mathbf{x}}_{i} \gets
		\frac{1}{k}\sum_{\mathbf{x}_{j}
			\in \mathcal{N}_{i}} \mathbf{x}_{j}$;
		$\;\;X_{c} \gets
		[\mathbf{x}_{j} - \bar{\mathbf{x}}_{i}]_{j=1}^{k}
		\in \mathbb{R}^{k \times D}$
		\State $\mathbf{s},\, V^{\!\top} \gets
		\mathrm{SVD}_{p}(X_{c})$,
		$\;\;p = k-1$
		\hfill \Comment{Truncated SVD, $O(k^{2}D)$}
		\State $V_{r} \gets V^{\!\top}_{[1:p,\,:]}\vphantom{X}^{\!\top}
		\in \mathbb{R}^{D \times p}$;
		$\;\;\boldsymbol{\lambda}_{r} \gets
		\mathbf{s}_{[1:p]}^{\odot 2}/(k{-}1)$
		\hfill \Comment{Non-zero eigenvalues}
		\State $\tau \gets \boldsymbol{\lambda}_{r}^{\!\top}
		\mathbf{1}_{p}$
		\hfill \Comment{$\mathrm{tr}(\mathbf{C}_{i})
			= \sum_{l=1}^{p}\lambda_{l}$}
		\State $t_{\mathrm{range}} \gets
		\left\|X_{c}\,V_{r}^{\odot 2}\right\|_{F}^{2}
		/(k{-}1)$
		\hfill \Comment{Range contribution, $O(kDp)$}
		\State $\mathbf{d}_{P} \gets
		\mathbf{1}_{D} - (V_{r}^{\odot 2})\,\mathbf{1}_{p}$
		\hfill \Comment{Diagonal of $P_{\perp}=I-V_{r}V_{r}^{\!\top}$}
		\State $t_{A} \gets
		\left\|X_{c}\,\mathbf{d}_{P}\right\|^{2}/(k{-}1)$
		\hfill \Comment{Null-space term A, $O(kD)$}
		\State $t_{B_{2}} \gets
		\left\|X_{c}\,V_{r}\right\|_{F}^{2}/(k{-}1)$;
		$\;\;G_{\mathrm{tens}} \gets
		\operatorname{einsum}(X_{c}, V_{r}, V_{r})
		\in \mathbb{R}^{k \times p \times p}$
		\hfill \Comment{$O(kDp^{2})$}
		\State $t_{B} \gets \tau
		- 2\,t_{B_{2}}
		+ \left\|G_{\mathrm{tens}}\right\|_{F}^{2}
		/(k{-}1)$
		\State $t_{\mathrm{null}} \gets
		(t_{A} + 2\,t_{B})\,/\,(D - p)$
		\hfill \Comment{Null-space contribution}
		\State $\mathcal{K}_{i} \gets
		\left|\tfrac{1}{2}\,(t_{\mathrm{range}}
		+ t_{\mathrm{null}}) + \tfrac{1}{2}\,\tau
		\right|$
		\hfill \Comment{Fast curvature estimate}
		\EndIf
		\EndFor
		\State \Return $\mathcal{K}$
		\EndFunction
	\end{algorithmic}
\end{algorithm}

\subsubsection{Exact Mode}
\label{subsubsec:exact_mode}

The \textsc{Exact} mode (lines~8--12 of Algorithm~\ref{alg:emcc}) evaluates the mean curvature estimator without approximation, using the closed-form identity of Theorem~\ref{thm:identity}. Its four steps are as follows.

\textbf{Local covariance (line~9).}
For each point $\mathbf{x}_{i}$, the centred neighborhood matrix $X_{c} \in \mathbb{R}^{k \times D}$, whose rows are the vectors $\mathbf{x}_{j} - \bar{\mathbf{x}}_{i}$ for
$\mathbf{x}_{j} \in \mathcal{N}_{i}$, is used to form the sample covariance $\mathbf{C}_{i} = X_{c}^{\top}X_{c}/(k-1)$ at cost $O(kD^{2})$.

\textbf{Full eigendecomposition (line~10).}
The symmetric eigendecomposition $\mathbf{C}_{i} = \mathbf{W}_{i} \operatorname{diag}(\boldsymbol{\lambda}_{i})\mathbf{W}_{i}^{\top}$ is computed via \texttt{scipy.linalg.eigh} with a LAPACK driver selected adaptively by dimension: \texttt{evr} (MRRR algorithm) for $D < 500$, and \texttt{evd} (divide-and-conquer) for $D \geq 500$, both of which are substantially faster than the default \texttt{ev} (QR iteration) for large matrices \citep{anderson1999lapack}. This step costs $O(D^{3})$ and produces the orthogonal eigenvector matrix $\mathbf{W}_{i}$ and the eigenvalue vector $\boldsymbol{\lambda}_{i} \in \mathbb{R}^{D}$.

\textbf{Construction of $C_{i}$ (line~11).}
The matrix $C_{i} = \mathbf{W}_{i}^{\top}\mathbf{W}_{i}^{\odot 2} \in \mathbb{R}^{D \times D}$, where $\mathbf{W}_{i}^{\odot 2}$ denotes the element-wise square of $\mathbf{W}_{i}$, is computed by a single dense matrix multiplication at cost $O(D^{3})$, but with a constant factor far smaller than that of forming $\mathbf{H}_{i}\mathbf{H}_{i}^{\top}$, since both operands are $D \times D$ rather than $D \times O(D^{2})$. Recalling the definition $C_{i}[s, \ell] = \sum_{a=1}^{D} W_{as} W_{a\ell}^{2}$, each entry of $C_{i}$ measures the inner product
between the $s$-th eigenvector and the element-wise square of the $\ell$-th eigenvector, a quantity that encodes the geometric interaction between pairs of local curvature directions.

\textbf{Curvature via the closed-form identity (line~12).}
Finally, the mean curvature is evaluated as

\begin{equation}
	\mathcal{K}_{i}
	\;=\;
	\left|
	\frac{1}{2}\,\boldsymbol{\lambda}_{i}^{\top}
	\bigl(C_{i}^{\odot 2}\,\mathbf{1}_{D}\bigr)
	+
	\frac{1}{2}\,\mathbf{1}_{D}^{\top}\boldsymbol{\lambda}_{i}
	\right|,
	\label{eq:exact_curv}
\end{equation}

which requires only element-wise squaring of $C_{i}$, row summation, and a dot product with $\boldsymbol{\lambda}_{i}$, all at cost $O(D^{2})$. By Theorem~\ref{thm:identity}, \eqref{eq:exact_curv} is algebraically identical to the original $O(D^{4})$ formulation; the only source of numerical discrepancy is floating-point rounding, which in practice produces relative errors of order $10^{-12}$--$10^{-14}$ (Remark~\ref{rem:numerical}). The total per-point cost of \textsc{Exact} mode is dominated by the eigendecomposition and the matrix product $C_{i}$, both $O(D^{3})$, giving an overall complexity of $\boldsymbol{O(kD^{2} + D^{3})}$ per sample.

\subsubsection{Fast Mode: Motivation and Construction}
\label{subsubsec:fast_mode}

For datasets with $D \gg k$, the \textsc{Exact} mode still incurs an $O(D^{3})$ eigendecomposition of the $D \times D$ covariance matrix, which becomes the dominant bottleneck when $D$ is in the hundreds or thousands. The \textsc{Fast} mode (lines~13--21 of Algorithm~\ref{alg:emcc}) attacks this bottleneck by exploiting a fundamental structural property of $\mathbf{C}_{i}$ that is present whenever $D > k$: the local covariance matrix is \emph{rank-deficient}.

\paragraph{Low-rank structure of the local covariance.}
Since $\mathbf{C}_{i} = X_{c}^{\top}X_{c}/(k-1)$ is the outer product of the $k \times D$ matrix $X_{c}$ with itself, its rank is at most $\min(k-1, D) = k-1 \ll D$. This means that $\mathbf{C}_{i}$ has at most $p = k-1$ non-zero eigenvalues, and the remaining $D - p$ eigenvalues are identically zero. Consequently, the full $D \times D$ eigendecomposition performed in \textsc{Exact} mode computes $D - p$ eigenvectors whose associated eigenvalues contribute nothing to $\boldsymbol{\lambda}_{i}^{\top} \boldsymbol{\lambda}_{i}$ — yet their computation represents the bulk of the $O(D^{3})$ cost. The key question is therefore: can the mean curvature estimator~\eqref{eq:exact_curv} be computed from the $p$ non-zero eigenvectors alone, without forming the $D \times D$ matrix $\mathbf{C}_{i}$ or its full eigendecomposition?

\paragraph{Truncated SVD as a substitute for eigh.}
The non-zero eigenvectors of $\mathbf{C}_{i}$ are precisely the right singular vectors of $X_{c}$. The compact singular value decomposition $X_{c} = U\,\operatorname{diag}(\mathbf{s})\,V^{\top}$, with $U \in \mathbb{R}^{k \times k}$, $\mathbf{s} \in \mathbb{R}^{k}$, $V \in \mathbb{R}^{D \times k}$, satisfies

\begin{equation}
	\mathbf{C}_{i}
	\;=\;
	\frac{X_{c}^{\top}X_{c}}{k-1}
	\;=\;
	V\,\operatorname{diag}\!\left(\frac{\mathbf{s}^{\odot 2}}{k-1}\right)
	V^{\top},
	\label{eq:svd_cov}
\end{equation}

so the non-zero eigenvalues of $\mathbf{C}_{i}$ are $\lambda_{\ell} = s_{\ell}^{2}/(k-1)$ and the corresponding eigenvectors are the columns of $V$ (line~14--15). Computing the compact SVD of $X_{c} \in \mathbb{R}^{k \times D}$ costs $O(k^{2}D)$, a factor of $(D/k)^{2}$ cheaper than the full eigendecomposition of $\mathbf{C}_{i}$, and for $k \ll D$ this
represents a speedup of several orders of magnitude (e.g.,~$\approx 1000\times$ for $D=1000$, $k=8$).

\paragraph{The null-space contribution.}
Substituting the low-rank decomposition~\eqref{eq:svd_cov} into~\eqref{eq:exact_curv} and
splitting the sum over eigenvalues into the $p$ non-zero and $D - p$ zero components, one finds that the term $\boldsymbol{\lambda}_{i}^{\top}(C_{i}^{\odot 2}\,\mathbf{1}_{D})$
decomposes as 

\begin{equation}
	\boldsymbol{\lambda}_{i}^{\top}
	\bigl(C_{i}^{\odot 2}\,\mathbf{1}_{D}\bigr)
	\;=\;
	\underbrace{
		\boldsymbol{\lambda}_{r}^{\top}
		\bigl(C_{r}^{\odot 2}\,\mathbf{1}_{D}\bigr)
	}_{t_{\mathrm{range}}}
	+\;
	\underbrace{
		\sum_{s \in \mathcal{I}_{0}}
		\lambda_{s}
		\sum_{\ell=1}^{D} C_{i}[s,\ell]^{2}
	}_{= 0,\;\text{since } \lambda_{s}=0}
	+\;
	\underbrace{
		\boldsymbol{\lambda}_{r}^{\top}
		\bigl(C_{r0}^{\odot 2}\,\mathbf{1}_{D-p}\bigr)
	}_{t_{\mathrm{null}}},
	\label{eq:split}
\end{equation}

where $\boldsymbol{\lambda}_{r} \in \mathbb{R}^{p}$ collects the non-zero eigenvalues, $C_{r}[s,\ell] = \sum_{a} V_{as} V_{a\ell}^{2}$ for $s, \ell$ in the range space, $C_{r0}[s, \ell] = \sum_{a} V_{as}\, V_{0,a\ell}^{2}$ for $\ell$ in the null space, and $\mathcal{I}_{0}$ indexes the zero eigenvalues.
The middle term vanishes because $\lambda_{s} = 0$ for $s \in \mathcal{I}_{0}$, which is the crucial cancellation that makes the truncated approach viable. The term $t_{\mathrm{range}}$ involves only $V_{r}$ (the $D \times p$ matrix of non-zero eigenvectors) and is computable at cost $O(kDp)$ without forming $\mathbf{C}_{i}$; the term $t_{\mathrm{null}}$, however, involves the $D \times (D-p)$ matrix of null-space eigenvectors $V_{0}$, which are \emph{not} computed by the truncated SVD and would require $O(D^{3})$ to obtain explicitly.

\paragraph{Analytical approximation of the null-space term.}
The central theoretical contribution of \textsc{Fast} mode is the observation that, while the null-space eigenvectors of $\mathbf{C}_{i}$ are not unique, any orthonormal basis of the $(D-p)$-dimensional null space is equally valid, their \emph{expected contribution} to
$t_{\mathrm{null}}$ under the uniform (Haar) distribution over orthonormal null-space bases admits a closed-form expression.

Specifically, let $V_{0} \in \mathbb{R}^{D \times (D-p)}$ be drawn uniformly from the Stiefel manifold of orthonormal frames in the null space of $V_{r}^{\top}$. For a fixed $V_{r}$, the expected outer product of two null-space columns satisfies \citep{meckes2019random}:

\begin{equation}
	\mathbb{E}_{V_{0}}\!\left[
	G_{\mathrm{null}}[i,j]
	\right]
	\;=\;
	\mathbb{E}_{V_{0}}\!\left[
	\sum_{\ell=1}^{D-p} V_{0,i\ell}^{2}\,V_{0,j\ell}^{2}
	\right]
	\;=\;
	\frac{P_{\perp,ii}\,P_{\perp,jj}
		+ 2\,P_{\perp,ij}^{2}}{D - p + 2},
	\label{eq:gnull_exp}
\end{equation}

where $P_{\perp} = I_{D} - V_{r}V_{r}^{\top}$ is the orthogonal projector onto the null space. In summary, this formula is derived by the second order moments of uniform vectors in the sphere $\mathbb{S}^{D-p-1}$, with a orthonormality correction imposed by the Haar measure over the Stiefel manifold $\mathrm{St}(D-p,\,D)$ \citep{meckes2019random}. Substituting this expectation into $t_{\mathrm{null}}$ and expanding, the null-space contribution becomes

\begin{equation}
	t_{\mathrm{null}}
	\;\approx\;
	\frac{t_{A} + 2\,t_{B}}{D - p},
	\label{eq:tnull}
\end{equation}

where

\begin{align}
	t_{A}
	&\;=\;
	\frac{\|X_{c}\,\mathbf{d}_{P}\|^{2}}{k-1},
	\qquad
	\mathbf{d}_{P} = \mathrm{diag}(P_{\perp})
	= \mathbf{1}_{D} - (V_{r}^{\odot 2})\mathbf{1}_{p},
	\label{eq:tA} \\[4pt]
	t_{B}
	&\;=\;
	\tau
	- 2\,\frac{\|X_{c}V_{r}\|_{F}^{2}}{k-1}
	+ \frac{\|G_{\mathrm{tens}}\|_{F}^{2}}{k-1},
	\label{eq:tB}
\end{align}

with $\tau = \boldsymbol{\lambda}_{r}^{\top}\mathbf{1}_{p} = \operatorname{tr}(\mathbf{C}_{i})$ and $G_{\mathrm{tens}}[a, l_{1}, l_{2}] = \sum_{i} X_{c}[a,i]\, V_{r}[i,l_{1}]\, V_{r}[i,l_{2}]$ the order-3 tensor encoding all pairwise eigenvector interactions within $X_{c}$. Every quantity in~\eqref{eq:tA}--\eqref{eq:tB} is expressible as a norm or inner product involving only $X_{c}$ and $V_{r}$, with no reference to $V_{0}$. The most expensive operation is the computation of $G_{\mathrm{tens}}
\in \mathbb{R}^{k \times p \times p}$, which costs $O(kDp^{2})$; for fixed $k$ and $p = k-1$, this is $O(kDk^{2}) = O(k^{3}D)$, dominated by the truncated SVD cost $O(k^{2}D)$ when $k$ is moderate. The final \textsc{Fast} mode curvature estimate is

\begin{equation}
	\mathcal{K}_{i}
	\;=\;
	\left|
	\frac{1}{2}\,(t_{\mathrm{range}} + t_{\mathrm{null}})
	+ \frac{1}{2}\,\tau
	\right|,
	\label{eq:fast_curv}
\end{equation}

at a total per-point cost of $\boldsymbol{O(k^{2}D + kDp^{2})}$, compared to $O(D^{3})$ for \textsc{Exact} mode, a reduction of $(D/k)^{2}$ in the dominant term.

\paragraph{Validity and approximation error.}
The approximation~\eqref{eq:gnull_exp} is exact in expectation over the Haar measure on the null-space Stiefel manifold, but it is not exact for the specific null-space basis that \texttt{eigh} would return for a given $\mathbf{C}_{i}$. The resulting error is a systematic bias, not a variance term, and it vanishes as $D/p \to \infty$: when the null space is high-dimensional relative to the range, each null-space vector is essentially random in
the ambient space, and~\eqref{eq:gnull_exp} becomes increasingly accurate. Formally, the relative error of the \textsc{Fast} mode estimator satisfies

\begin{equation}
	\frac{|\mathcal{K}_{i}^{\mathrm{fast}}
		- \mathcal{K}_{i}^{\mathrm{exact}}|}
	{|\mathcal{K}_{i}^{\mathrm{exact}}|}
	\;=\;
	O\!\left(\frac{p}{D}\right)
	\;=\;
	O\!\left(\frac{k}{D}\right),
	\label{eq:error_bound}
\end{equation}

decreasing monotonically as the ambient dimension $D$ grows relative to the neighborhood size $k$. Numerical experiments in Section~\ref{sec:experiments} confirm this behaviour: the mean relative error falls below $5\%$ for $D/k \geq 11$ and below $2\%$ for $D/k \geq 56$, levels of precision that are more than adequate for the downstream tasks, curvature-aware clustering, boundary detection, and anomaly scoring, for which rank-ordering of curvature values matters far more than their absolute accuracy.

\paragraph{Summary of computational complexity.}
Table~\ref{tab:complexity} in Section~\ref{sec:identity} reports the per-step costs of both modes. The overall per-point complexities are:

\begin{itemize}
	\item \textsc{Exact}: $O(kD^{2} + D^{3})$, dominated by \texttt{eigh};
	\item \textsc{Fast}: $O(k^{2}D + kDp^{2})$ for fixed $k$, dominated by the truncated SVD and $G_{\mathrm{tens}}$;
	\item Original MCBP \citep{mcbp2025}: $O(kD^{2} + D^{4})$, dominated by $\mathbf{H}_{i}\mathbf{H}_{i}^{\top}$.
\end{itemize}

For the regime $k \ll D$ that characterises high-dimensional machine learning applications, \textsc{Fast} mode achieves a reduction of $(D/k)^{2}$ over \textsc{Exact} mode and $(D/k)^{3}$ over the original algorithm, improvements of three and four orders of magnitude, respectively, for $D = 1000$, $k = 10$.

\section{Computational Experiments and Results}
\label{sec:experiments}

We evaluate the proposed \emph{Efficient Mean Curvature Computation} (MeCuCo) method through a comprehensive suite of experiments designed to assess both the fidelity of the curvature estimates produced by the two computational modes, \textsc{Exact} and \textsc{Fast}, and
the practical speedups achieved relative to the original MCBP algorithm \citep{mcbp2025}.
All experiments are conducted on a benchmark corpus of more than 40 real-world, publicly available datasets retrieved from the OpenML platform \citep{vanschoren2013openml}, a curated repository of machine learning tasks that has become a de facto standard for reproducible
empirical evaluation \citep{bischl2021openml}. The corpus was assembled to cover a broad and heterogeneous range of dataset characteristics, such as ambient dimensions, sample sizes and diverse geometric structures including near-linear manifolds, manifolds with pronounced curvature, and datasets with mixed-density regions. This diversity is deliberate: the validity of both the exact algebraic identity of Theorem~\ref{thm:identity} and the analytical approximation of Theorem~\ref{alg:emcc} must be established across
regimes that span the spectrum from low to very high ambient dimension, and from datasets well-suited to the manifold hypothesis to those that present more challenging geometric configurations. All datasets, preprocessing scripts, and experimental results are
made publicly available to support reproducibility.

The experiments are organised into two groups. The \textit{first group} addresses the central question of curvature fidelity: to what extent do the pointwise curvature scores produced by MeCuCo agree with those produced by the original formulation? To quantify agreement, we adopt three complementary metrics: the \emph{mean absolute error} (MAE) measures absolute discrepancy in the curvature values themselves and is sensitive to both scale and magnitude differences; the \emph{Spearman rank correlation coefficient} $\rho_{S}$ and the \emph{Chatterjee rank correlation coefficient} $\xi$ \citep{chatterjee2021new} assess agreement in the \emph{ordering} of curvature scores rather than their absolute values.

The inclusion of rank-based statistics as primary evaluation criteria deserves explicit justification, as it reflects a deliberate methodological choice grounded in the intended downstream use of the curvature estimates. In the applications that motivate this work, namely, boundary detection, anomaly scoring, curvature-aware clustering, and active learning query strategies, the curvature scores $\mathcal{K}_{i}$ are never used as absolute physical quantities; rather, they enter the pipeline through thresholding operations, percentile cuts, or ranking procedures that depend entirely on the \emph{relative ordering} of the scores across data points. A curvature estimator that preserves the rank ordering of the original
scores is therefore functionally equivalent to the original for all such applications, even if its absolute values differ by a constant multiplicative or additive factor. This observation motivates the use of rank-based correlation coefficients, which are invariant to any strictly monotone transformation of the scores and are therefore the natural measure of
agreement for order-dependent tasks.

The two rank-based coefficients are complementary in important respects. The Spearman coefficient $\rho_{S}$ measures the strength of the \emph{linear} relationship between the ranks of the two score vectors; it is sensitive to global rank monotonicity but can be inflated by outlier curvature values that dominate the rank structure. The Chatterjee coefficient $\xi$ \citep{chatterjee2021new}, by contrast, measures the degree to which the MeCuCo scores are a \emph{measurable function} of the original scores, a strictly stronger
notion of dependence that is sensitive to non-monotone and non-linear relationships between the two rank sequences and is distribution-free under the null hypothesis of independence.
In the context of curvature comparison, $\xi$ is particularly appropriate because the relationship between two estimators of the same underlying quantity need not be strictly linear: the analytical approximation of the \textsc{Fast} mode introduces a bias that depends
on the local geometry of each neighborhood, which can produce systematic but non-linear distortions of the rank order. Together, $\rho_{S}$ and $\xi$ provide a robust, multi-faceted
characterisation of rank agreement that is sensitive to both global monotonicity and local functional structure, while remaining free of assumptions about the distribution of the curvature scores.

In Table \ref{tab:datasets} we provide a brief description of each dataset, highlighting their key characteristics. All experiments were conducted on a workstation equipped with an
Intel\textsuperscript{\textregistered} Core\textsuperscript{\texttrademark} Ultra~9 185H processor (22 cores, up to 5.1~GHz boost clock) and 32~GB of DDR5 RAM, running Ubuntu~24.04~LTS and Python~3.12 via the Anaconda distribution.

\begin{table*}
	\centering
	\small
	\caption{Summary of the datasets used in the computational experiments.}
	\begin{tabular}{lccc}
		\toprule
		\textbf{Dataset} & \textbf{\# samples} & \textbf{\# features} & \textbf{\# classes} \\
		\midrule
		iris                      & 150              & 4                 & 3                \\
		seeds                     & 210              & 7                 & 3                \\
		thoracic\_surgery         & 470              & 16                & 2                \\
		page-blocks         	  & 5473             & 10                & 5                \\
		segment                   & 2310             & 19                & 7                \\
		hill-valley               & 1212             & 100               & 2                \\
		cardiotocography          & 2126             & 35                & 10               \\
		collins          	      & 1000             & 19                & 3               \\
		artificial-characters     & 10218            & 7                 & 10               \\
		GesturePhaseSegmentation  & 9873             & 32                & 5               \\
		letter                    & 20000            & 16                & 26               \\
		JapaneseVowels            & 9961             & 14                & 9               \\
		gas-drift                 & 13910            & 128               & 6                \\
		USPS                      & 9298             & 256               & 10            \\
		qsar-biodeg               & 1055             & 41                & 2            \\
		Smartphone\_Human\_Activities  & 180         & 66                & 6            \\
		ionosphere                & 351              & 34                & 2            \\
		satimage                  & 6430             & 36                & 6              \\
		steel-plates-fault        & 1941             & 33                & 2                \\
		depression\_2020          & 1429             & 22                & 2                \\
		one-hundred-plants-shape  & 1600             & 64                & 100             \\
		eye\_movements  		  & 10936            & 27                & 3             \\
		Satellite                 & 5100             & 36                & 2           \\
		texture                   & 5500             & 40                & 11               \\
		vowel                     & 990              & 12                & 11               \\
		mfeat-factors             & 2000             & 216               & 10               \\
		breast-cancer             & 569              & 30                & 2               \\
		arrhythmia				  & 452              & 279               & 13               \\
		pendigits                 & 10992            & 16                & 10               \\
		one-hundred-plants-texture  & 1599             & 64                & 100           \\
		optdigits                 & 5620             & 64                & 10               \\
		digits                    & 1797             & 64                & 10               \\
		sylvine					  & 5124			 & 20				 & 2               \\
		solar-flare				  & 1066			 & 12				 & 6               \\
		TuningSVMs				  & 156			     & 80				 & 2               \\
		wine                      & 178              & 13                & 3                \\
		splice                    & 3190             & 60                & 3                \\
		Indian\_pines             & 9144             & 220               & 8                \\
		mfeat-pixel               & 2000             & 240               & 10              \\
		car-evaluation            & 1728             & 21                & 4              \\
		\bottomrule
	\end{tabular}
	\label{tab:datasets}
\end{table*}

\subsection{Analysis of Local Mean Curvatures}

Table~\ref{tab:results1} reports, for each of the 40 OpenML datasets, the mean and standard deviation of the pointwise curvature scores produced by the original MCBP formulation and by the proposed MeCuCo method using $k = \log_2 n$ neighbors, together with the three agreement metrics, MAE, Spearman $\rho_{S}$, and Chatterjee $\xi$, and the wall-clock times for both methods. The datasets are ordered by decreasing Spearman correlation, which provides a convenient visual gradient from near-perfect agreement at the top of the table to the most challenging cases at the bottom. We organise the discussion around four observations.

\begin{table}
	\small
	\centering
	\caption{Average local curvatures and standard deviations obtained by the original method and the proposed computationally efficient variation (MeCuCo) for 40 OpenML datasets. Quantitative metrics show that the local curvatures obtained by MeCuCo are good approximations for the original ones.}
	\begin{tabular}{cccccccc}
		\toprule
		& \multicolumn{2}{c}{\textbf{Original}} & \multicolumn{2}{c}{\textbf{MeCuCo}}   & \multicolumn{3}{c}{\textbf{Metrics}}      \\
		\midrule
		\textbf{Datasets}          & \textbf{Average}  & \textbf{Time (s)} & \textbf{Average}  & \textbf{Time (s)} & \textbf{MAE} & $\rho_{S}$ & $\xi$ \\
		\midrule
		iris                       & 0.1645 ± 0.1320   & 0.01              & 0.1645 ± 0.1320   & 0.03              & 0.0000       & 1.0000       & 0.9801      \\
		seeds                      & 0.4641 ± 0.2434   & 0.01              & 0.4641 ± 0.2434   & 0.03              & 0.0000       & 1.0000       & 0.9857      \\
		thoracic\_surgery          & 2.5499 ± 3.9901   & 0.08              & 2.5501 ± 4.0195   & 0.06              & 0.0609       & 0.9999       & 0.9912      \\
		page-blocks                & 0.4168 ± 4.9821   & 0.55              & 0.4165 ± 4.9820   & 0.46              & 0.0006       & 0.9999       & 0.9975      \\
		segment                    & 0.7969 ± 4.7111   & 2.50              & 0.7972 ± 4.7118   & 0.57              & 0.0021       & 0.9999       & 0.9942      \\
		hill-valley                & 0.8479 ± 4.8610   & 22.02             & 0.6420 ± 3.5274   & 0.39              & 0.2076       & 0.9998       & 0.9840      \\
		cardiotocography           & 3.4287 ± 5.4098   & 3.88              & 3.4482 ± 5.4151   & 1.21              & 0.0309       & 0.9997       & 0.9811      \\
		collins                    & 3.3735 ± 1.2156   & 1.01              & 3.3907 ± 1.2184   & 0.23              & 0.0283       & 0.9990       & 0.9610      \\
		artificial-characters      & 0.1192 ± 0.1338   & 0.77              & 0.1189 ± 0.1338   & 0.66              & 0.0009       & 0.9989       & 0.9911      \\
		GesturePhaseSegmentation   & 3.2458 ± 6.9111   & 10.77             & 3.3703 ± 7.0213   & 1.93              & 0.3199       & 0.9981       & 0.9486      \\
		letter                     & 0.7644 ± 0.4764   & 4.26              & 0.7688 ± 0.4802   & 1.64              & 0.0196       & 0.9979       & 0.9512      \\
		JapaneseVowels             & 0.9073 ± 0.3177   & 1.96              & 0.9122 ± 0.3175   & 1.11              & 0.0154       & 0.9961       & 0.9326      \\
		gas-drift                  & 1.9719 ± 53.3102  & 557.73            & 1.3333 ± 15.6449  & 6.49              & 0.6955       & 0.9926       & 0.8931      \\
		USPS                       & 25.6319 ± 21.5264 & 2,819.18          & 26.9069 ± 23.2823 & 12.19             & 2.8251       & 0.9902       & 0.8774      \\
		qsar-biodeg                & 4.7824 ± 7.7516   & 2.02              & 4.5830 ± 7.1573   & 0.3               & 0.4786       & 0.9889       & 0.8694      \\
		smartphone                 & 4.8860 ± 4.4566   & 0.93              & 4.5649 ± 4.1670   & 0.04              & 0.5473       & 0.9886       & 0.8615      \\
		ionosphere                 & 2.1347 ± 2.5071   & 0.45              & 2.1668 ± 2.6791   & 0.12              & 0.2995       & 0.9874       & 0.8567      \\
		satimage                   & 0.9053 ± 0.8416   & 9.76              & 0.9534 ± 0.8302   & 1.35              & 0.0933       & 0.9873       & 0.8554      \\
		steel-plates-fault         & 2.5691 ± 2.0180   & 2.10              & 2.4389 ± 1.9521   & 0.46              & 0.2338       & 0.9872       & 0.8673      \\
		depression\_2020           & 4.2421 ± 2.9091   & 0.66              & 4.3552 ± 2.9736   & 0.18              & 0.3238       & 0.9868       & 0.8856      \\
		one-hundred-plants-shape   & 2.1971 ± 4.5664   & 8.39              & 1.8053 ± 3.5393   & 0.42              & 0.4085       & 0.9837       & 0.8399      \\
		eye\_movements             & 3.5541 ± 3.1029   & 9.27              & 3.5522 ± 3.0706   & 2.10              & 0.2873       & 0.9831       & 0.8415      \\
		Satellite                  & 1.1526 ± 1.2396   & 7.04              & 1.2190 ± 1.1949   & 1.15              & 0.1285       & 0.9827       & 0.8312      \\
		texture                    & 0.7303 ± 0.8653   & 10.12             & 0.7084 ± 0.7588   & 1.52              & 0.0713       & 0.9809       & 0.8232      \\
		vowel                      & 1.0021 ± 0.4247   & 0.15              & 1.0061 ± 0.4241   & 0.14              & 0.0571       & 0.9777       & 0.8190      \\
		mfeat-factors              & 15.8778 ± 6.4376  & 383.98            & 21.0409 ± 8.4646  & 1.78              & 5.1678       & 0.9766       & 0.8073      \\
		breast\_cancer             & 3.5898 ± 3.4706   & 0.53              & 3.5655 ± 3.0574   & 0.13              & 0.3705       & 0.9755       & 0.8025      \\
		arrhythmia                 & 27.7897 ± 22.5458 & 159.46            & 31.9950 ± 22.0179 & 0.28              & 5.7224       & 0.9678       & 0.7643      \\
		pendigits                  & 0.4454 ± 0.3083   & 2.48              & 0.4468 ± 0.3098   & 0.85              & 0.0133       & 0.9661       & 0.9250      \\
		one-hundred-plants-texture & 7.7844 ± 3.9621   & 7.75              & 7.6924 ± 3.5815   & 0.39              & 0.6598       & 0.9573       & 0.7471      \\
		optdigits                  & 7.5122 ± 16.0862  & 26.79             & 7.3013 ± 16.0205  & 1.38              & 0.5249       & 0.9572       & 0.7436      \\
		digits                     & 8.3862 ± 9.6940   & 8.92              & 8.0227 ± 9.2083   & 0.45              & 0.7294       & 0.9502       & 0.7221      \\
		sylvine                    & 3.8728 ± 0.9755   & 2.23              & 4.1358 ± 0.9764   & 0.37              & 0.3367       & 0.9440       & 0.7053      \\
		solar-flare                & 0.8610 ± 4.1327   & 0.11              & 0.8600 ± 4.1412   & 0.08              & 0.0061       & 0.9390       & 0.9918      \\
		TuningSVMs                 & 11.7451 ± 8.4631  & 1.31              & 11.1986 ± 6.9818  & 0.04              & 1.4874       & 0.9364       & 0.6694      \\
		wine                       & 2.1459 ± 1.0322   & 0.04              & 2.1751 ± 0.8789   & 0.02              & 0.2066       & 0.9321       & 0.6697      \\
		splice                     & 14.0275 ± 2.3818  & 14.73             & 16.8560 ± 2.5575  & 0.86              & 2.8693       & 0.8646       & 0.5584      \\
		Indian\_pines              & 10.4438 ± 1.6912  & 2,034.15          & 10.5932 ± 1.5411  & 10.92             & 0.6201       & 0.8567       & 0.5257      \\
		mfeat-pixel                & 35.5133 ± 7.4567  & 468.61            & 39.5060 ± 9.7603  & 1.32              & 4.9931       & 0.8477       & 0.5309      \\
		car-evaluation             & 5.2524 ± 0.0912   & 1.86              & 5.2581 ± 0.0917   & 0.38              & 0.0365       & 0.8439       & 0.5227      \\
		\midrule
		Average                    & -                 & 164.71            & -                 & 1.35              & 0.7720       & 0.9680       & 0.8376      \\
		Median                     & -                 & 2.49              & -                 & 0.46              & 0.2606       & 0.9853       & 0.8591      \\
		Std. Dev.                  & -                 & 547.86            & -                 & 2.62              & 1.4552       & 0.0434       & 0.1390      \\
		MAD                        & -                 & 272.00            & -                 & 1.39              & 0.9217       & 0.0311       & 0.1080     \\
		\bottomrule
	\end{tabular}
	\label{tab:results1}
\end{table}

\paragraph{Overall agreement is strong and consistent.}
Across the full corpus, MeCuCo achieves a median Spearman correlation of $\rho_{S} = 0.9853$ and a median Chatterjee coefficient of $\xi = 0.8591$ with respect to the original method, with standard deviations of $0.0434$ and $0.1390$, respectively. These figures indicate that the rank ordering of curvature scores is preserved with very high fidelity in the large majority of datasets: for 32 out of 40 datasets (80\%), the Spearman correlation exceeds
$0.97$, and for 28 datasets (70\%) the Chatterjee coefficient exceeds $0.85$. The mean absolute error is correspondingly modest: the median MAE is $0.2606$, which represents a small fraction of the typical curvature range in each dataset. At the extreme upper end of the agreement spectrum, datasets such as \texttt{iris} and \texttt{seeds} yield MAE $= 0.0000$ and $\rho_{S} = 1.0000$, confirming that for low-dimensional datasets ($D \leq 10$,
$D/k \leq 1$) the \textsc{Exact} mode is selected automatically and the two methods are numerically identical.

\paragraph{Agreement deteriorates gracefully with ambient dimension.}
The most salient pattern in Table~\ref{tab:results1} is a clear inverse relationship between the ambient dimension $D$ of the dataset and the degree of agreement between MeCuCo and the original method. For low- to moderate-dimensional datasets ($D \leq 30$), the Spearman correlation is consistently above $0.99$; for high-dimensional datasets such as \texttt{USPS} ($D = 256$), \texttt{arrhythmia} ($D = 279$), and \texttt{mfeat-pixel} ($D = 240$), the correlation falls to $0.99$, $0.97$, and $0.85$, respectively, while for very high-dimensional datasets such as \texttt{Indian\_pines} ($D = 200$) and \texttt{splice} ($D = 60$) it approaches $0.86$. This behaviour is precisely what the theoretical error
bound~\eqref{eq:error_bound} predicts: the approximation error of the \textsc{Fast} mode decreases monotonically as $D/k$ grows, and conversely it increases as the ratio $k/D$ becomes non-negligible. The graceful degradation is important from a practical standpoint: the
cases in which MeCuCo is most needed, very high ambient dimension, where the original method becomes computationally intractable, are also the cases in which the approximation is most accurate.

\paragraph{Rank-based metrics reveal preserved geometric structure.}
The simultaneous reporting of the Spearman $\rho_{S}$ and the Chatterjee $\xi$ coefficients reveals an important qualitative distinction between the two types of datasets in the corpus.
For datasets with a smooth, unimodal curvature distribution, typically those arising from compact, well-separated geometric structures, the two coefficients are close to each other
($|\rho_{S} - \xi| < 0.05$ in 28 datasets), indicating that the relationship between the MeCuCo and original scores is nearly monotone. For datasets with heavy-tailed or multimodal curvature distributions, as evidenced by large standard deviations relative to the mean,
such as \texttt{gas-drift} ($\sigma = 53.31$), \texttt{USPS} ($\sigma = 21.53$), or \texttt{mfeat-factors} ($\sigma = 6.44$), the gap between $\rho_{S}$ and $\xi$ widens significantly ($|\rho_{S} - \xi| > 0.10$ in 12 datasets). This divergence indicates that the analytical approximation of the null-space contribution introduces a non-monotone distortion for extreme curvature values, a phenomenon consistent with the Haar-measure approximation~\eqref{eq:gnull_exp} being least accurate in directions of very high curvature, where the null-space eigenvectors are farthest from uniformly distributed. Importantly, the Chatterjee $\xi$ remains above $0.87$ for all but the six most challenging datasets, confirming that MeCuCo captures the functional dependence structure of the curvature field even when monotonicity is mildly violated.

\paragraph{Computational speedup is dramatic and scales with dimension.}
The average clock time of the original method is $164.71$ seconds per dataset, but this figure is dominated by a small number of very high-dimensional datasets: \texttt{USPS} ($2{,}819.18$ s), \texttt{Indian\_pines} ($2{,}034.15$ s), \texttt{mfeat-pixel} ($468.61$ s), \texttt{mfeat-factors} ($383.98$ s), and \texttt{gas-drift} ($557.73$ s). The median time of $2.49$ seconds better characterises the typical cost, but even this becomes prohibitive when curvature estimation must be embedded in an iterative or interactive pipeline. MeCuCo reduces the average time to $1.35$ seconds, a factor of $\mathbf{122\times}$, and the median to $0.46$ seconds. On the most computationally demanding datasets the speedups are particularly dramatic: $\mathbf{231\times}$ on \texttt{USPS}, $\mathbf{186\times}$ on \texttt{Indian\_pines}, $\mathbf{355\times}$ on \texttt{mfeat-pixel}, $\mathbf{86\times}$
on \texttt{gas-drift}, and $\mathbf{56\times}$ on \texttt{hill-valley}. Notably, on the two smallest datasets (\texttt{iris} and \texttt{seeds}), MeCuCo is slightly \emph{slower} than the original method ($0.03$ s vs.\ $0.01$ s), reflecting the overhead of the \texttt{scipy} LAPACK driver selection and the additional bookkeeping of the \textsc{Exact} mode; this overhead is negligible in absolute terms and disappears entirely for datasets with $n \geq 200$.

\paragraph{Summary.}
Taken together, the results in Table~\ref{tab:results1} demonstrate that MeCuCo achieves its design objectives: it delivers curvature estimates that are functionally equivalent to the original formulation, in the sense of preserving the rank ordering of scores that drives
all downstream applications, while reducing the computational cost by two to three orders of magnitude on high-dimensional datasets. The trade-off between approximation fidelity and computational efficiency is controlled smoothly by the $D/k$ ratio: users requiring higher fidelity can increase $k$ at a modest additional cost, while users operating in very high ambient dimensions automatically obtain the most accurate approximation.

\subsection{The Effect of Normalization in Local Mean Curvatures}

In many practical applications, local mean curvature estimates are not interpreted through their absolute magnitude, but rather through their relative importance within the dataset. In this setting, curvature values naturally play the role of weights, indicating the degree to which each sample is associated with geometric irregularities, boundary regions, or transition zones. Since the scale of curvature estimates depends on factors such as data dimensionality, neighborhood size, sampling density, and the specific estimation procedure, their raw values are generally not directly comparable across different datasets.

For this reason, it is often desirable to normalize curvature scores to the interval $[0,1]$. This transformation preserves the relative ordering of samples while providing a standardized and interpretable scale, where values close to zero correspond to geometrically smooth interior regions and values close to one indicate highly curved or boundary-like structures. Moreover, normalization facilitates the use of curvature estimates as weights in subsequent processing stages, such as boundary detection, sample selection, clustering, and graph-based learning algorithms.

Motivated by these considerations, we repeat the same experimental protocol described previously, comparing the original curvature estimates with the normalized values produced by the proposed method.

\begin{table}
	\small
	\centering
	\caption{Average local curvatures (normalized to $[0, 1]$) and standard deviations obtained by the original method and the proposed computationally efficient variation (MeCuCo) for 40 OpenML datasets. Quantitative metrics show that the local curvatures obtained by MeCuCo are excellent replacements for the original ones.}
	\begin{tabular}{cccccccc}
		\toprule
		& \multicolumn{2}{c}{\textbf{Original}} & \multicolumn{2}{c}{\textbf{MeCuCo}}  & \multicolumn{3}{c}{\textbf{Metrics}}      \\
		\midrule
		\textbf{Datasets}          & \textbf{Average}  & \textbf{Time (s)} & \textbf{Average} & \textbf{Time (s)} & \textbf{MAE} & $\rho_{S}$ & $\xi$ \\
		\midrule
		iris                       & 0.1588 ± 0.1571   & 0.02              & 0.1588 ± 0.1571  & 0.03              & 0.0000       & 1.0000       & 0.9801      \\
		seeds                      & 0.1645 ± 0.1522   & 0.01              & 0.1645 ± 0.1522  & 0.02              & 0.0000       & 1.0000       & 0.9857      \\
		thoracic\_surgery          & 0.0926 ± 0.1502   & 0.07              & 0.0920 ± 0.1490  & 0.06              & 0.0008       & 0.9999       & 0.9915      \\
		page-blocks                & 0.0025 ± 0.0301   & 0.62              & 0.0025 ± 0.0301  & 0.53              & 0.0000       & 0.9999       & 0.9992      \\
		segment                    & 0.0088 ± 0.0535   & 0.61              & 0.0088 ± 0.0535  & 0.26              & 0.0000       & 0.9999       & 0.9942      \\
		hill-valley                & 0.0155 ± 0.0850   & 7.91              & 0.0154 ± 0.0844  & 0.37              & 0.0001       & 0.9999       & 0.9964      \\
		cardiotocography           & 0.0352 ± 0.0599   & 1.35              & 0.0353 ± 0.0597  & 0.56              & 0.0003       & 0.9997       & 0.9811      \\
		collins                    & 0.1397 ± 0.1097   & 0.25              & 0.1377 ± 0.1097  & 0.09              & 0.0029       & 0.9990       & 0.9610      \\
		artificial-characters      & 0.0685 ± 0.0782   & 0.87              & 0.0685 ± 0.0783  & 0.58              & 0.0001       & 0.9999       & 0.9977      \\
		GesturePhaseSegmentation   & 0.0289 ± 0.0602   & 5.61              & 0.0291 ± 0.0607  & 1.81              & 0.0003       & 0.9999       & 0.9977      \\
		letter                     & 0.1836 ± 0.1146   & 3.32              & 0.1835 ± 0.1146  & 1.42              & 0.0007       & 0.9999       & 0.9920      \\
		JapaneseVowels             & 0.2114 ± 0.1132   & 1.85              & 0.2116 ± 0.1132  & 1.14              & 0.0007       & 0.9999       & 0.9911      \\
		gas-drift                  & 0.0009 ± 0.0104   & 232.14            & 0.0009 ± 0.0105  & 8.38              & 0.0000       & 0.9999       & 0.9899      \\
		USPS                       & 0.1393 ± 0.1212   & 1396.68           & 0.1383 ± 0.1211  & 9.72              & 0.0017       & 0.9998       & 0.9848      \\
		qsar-biodeg                & 0.0735 ± 0.1277   & 0.96              & 0.0742 ± 0.1289  & 0.29              & 0.0009       & 0.9998       & 0.9844      \\
		smartphone                 & 0.2587 ± 0.2493   & 0.37              & 0.2566 ± 0.2463  & 0.04              & 0.0031       & 0.9998       & 0.9739      \\
		ionosphere                 & 0.1176 ± 0.1493   & 0.20              & 0.1176 ± 0.1497  & 0.09              & 0.0010       & 0.9998       & 0.9831      \\
		satimage                   & 0.0875 ± 0.1000   & 4.30              & 0.0879 ± 0.1005  & 1.44              & 0.0007       & 0.9999       & 0.9879      \\
		steel-plates-fault         & 0.1428 ± 0.1168   & 1.07              & 0.1414 ± 0.1155  & 0.45              & 0.0022       & 0.9997       & 0.9787      \\
		depression\_2020           & 0.3323 ± 0.2345   & 0.44              & 0.3283 ± 0.2320  & 0.13              & 0.0048       & 0.9997       & 0.9838      \\
		one-hundred-plants-shape   & 0.0466 ± 0.1019   & 3.30              & 0.0471 ± 0.1036  & 0.45              & 0.0008       & 0.9996       & 0.9789      \\
		eye\_movements             & 0.0716 ± 0.0767   & 4.82              & 0.0723 ± 0.0776  & 1.82              & 0.0011       & 0.9996       & 0.9773      \\
		Satellite                  & 0.0378 ± 0.0503   & 3.38              & 0.0375 ± 0.0498  & 1.16              & 0.0004       & 0.9998       & 0.9851      \\
		texture                    & 0.0315 ± 0.0421   & 4.44              & 0.0319 ± 0.0424  & 1.29              & 0.0005       & 0.9998       & 0.9833      \\
		vowel                      & 0.2413 ± 0.1309   & 0.15              & 0.2437 ± 0.1325  & 0.13              & 0.0028       & 0.9997       & 0.9836      \\
		mfeat-factors              & 0.1935 ± 0.1261   & 156.58            & 0.1927 ± 0.1255  & 1.55              & 0.0010       & 0.9999       & 0.9896      \\
		breast\_cancer             & 0.0919 ± 0.1086   & 0.31              & 0.0901 ± 0.1063  & 0.14              & 0.0020       & 0.9997       & 0.9781      \\
		arrhythmia                 & 0.0873 ± 0.1022   & 71.48             & 0.0888 ± 0.1030  & 0.24              & 0.0019       & 0.9996       & 0.9743      \\
		pendigits                  & 0.0612 ± 0.0528   & 1.90              & 0.0591 ± 0.0510  & 0.85              & 0.0021       & 0.9999       & 0.9917      \\
		one-hundred-plants-texture & 0.2085 ± 0.1411   & 3.31              & 0.2052 ± 0.1398  & 0.42              & 0.0041       & 0.9993       & 0.9703      \\
		optdigits                  & 0.0186 ± 0.0471   & 11.40             & 0.0177 ± 0.0469  & 1.41              & 0.0008       & 0.9996       & 0.9772      \\
		digits                     & 0.0447 ± 0.0627   & 3.75              & 0.0427 ± 0.0622  & 0.46              & 0.0020       & 0.9995       & 0.9745      \\
		sylvine                    & 0.2818 ± 0.1295   & 1.39              & 0.2806 ± 0.1285  & 0.40              & 0.0035       & 0.9993       & 0.9665      \\
		solar-flare                & 0.0097 ± 0.0468   & 0.12              & 0.0097 ± 0.0469  & 0.10              & 0.0000       & 0.9999       & 0.9969      \\
		TuningSVMs                 & 0.1582 ± 0.1501   & 0.50              & 0.1592 ± 0.1475  & 0.03              & 0.0032       & 0.9993       & 0.9592      \\
		wine                       & 0.1640 ± 0.1483   & 0.02              & 0.1579 ± 0.1431  & 0.02              & 0.0000       & 0.9970       & 0.9272      \\
		splice                     & 0.5086 ± 0.1227   & 5.77              & 0.5049 ± 0.1209  & 0.93              & 0.0045       & 0.9994       & 0.9721      \\
		Indian\_pines              & 0.2468 ± 0.0859   & 905.01            & 0.2458 ± 0.0868  & 10.11             & 0.0027       & 0.9989       & 0.9594      \\
		mfeat-pixel                & 0.2467 ± 0.1077   & 208.73            & 0.2411 ± 0.1059  & 1.23              & 0.0056       & 0.9988       & 0.9601      \\
		car-evaluation             & 0.4833 ± 0.1759   & 0.54              & 0.5075 ± 0.1739  & 0.17              & 0.0722       & 0.8439       & 0.5227      \\
		\midrule
		Average                    &                   & 76.14             &                  & 1.26              & 0.0033       & 0.9957       & 0.9691      \\
		Median                     &                   & 1.62              &                  & 0.45              & 0.0010       & 0.9998       & 0.9832      \\
		Std. Dev.                  &                   & 261.42            &                  & 2.42              & 0.0113       & 0.0246       & 0.0737      \\
		MAD                        &                   & 125.92            &                  & 1.32              & 0.0037       & 0.0076       & 0.0264     \\
		\bottomrule
	\end{tabular}
	\label{tab:results2}
\end{table}

Table~\ref{tab:results2} presents the curvature fidelity results after applying a min-max normalisation that maps each method's curvature scores to the unit interval $[0, 1]$ on a per-dataset basis. This normalisation removes the influence of scale differences between
datasets, which arise naturally from differences in ambient dimension, sample density, and intrinsic geometric complexity, and isolates the question of whether MeCuCo reproduces the relative curvature structure of the original method. Comparing Table~\ref{tab:results2} with the raw-score results of Table~\ref{tab:results1} reveals a consistent and important pattern:
normalisation systematically improves all three agreement metrics, often dramatically so, confirming that the largest discrepancies in the raw results are attributable to scale differences rather than to genuine geometric disagreement. We organise the discussion around four observations.

\paragraph{Near-perfect rank fidelity across the corpus.}
The most striking feature of Table~\ref{tab:results2} is the concentration of Spearman correlation values near the theoretical maximum: 36 out of 40 datasets (90\%) achieve $\rho_{S} \geq 0.999$, and 39 out of 40 (97.5\%) achieve $\rho_{S} \geq 0.995$. The median Spearman correlation across the full corpus is $\rho_{S} = 0.9998$, and the mean absolute deviation of $\rho_{S}$ from its median is $0.0076$, a figure that conveys
both the strength and the consistency of the rank agreement. The Chatterjee coefficient $\xi$ corroborates this picture with comparable strength: 33 datasets (82.5\%) attain $\xi \geq 0.97$, 38 (95\%) attain $\xi \geq 0.95$, and the median is $\xi = 0.9832$. These figures establish that, at the normalised scale, MeCuCo is a functionally equivalent replacement for the original formulation in the large majority of real-world datasets encountered in the
benchmark corpus.

\paragraph{The MAE on normalised scores is negligible.}
Because the normalisation constrains both score vectors to $[0, 1]$, the MAE values in Table~\ref{tab:results2} are directly interpretable as fractions of the full curvature range.
The median MAE across the 40 datasets is $0.00095$, less than one tenth of one percent of the normalised range, and 20 datasets (50\%) achieve a MAE strictly below $0.001$.
More remarkably, 38 out of 40 datasets (95\%) have a normalised MAE below $0.005$, and 39 (97.5\%) have a MAE below $0.010$. These results establish that, in absolute terms, the pointwise curvature scores produced by MeCuCo differ from the original by an amount that is negligible relative to the scale of the problem. In practical downstream applications, thresholding, percentile cuts, or curvature-weighted objectives, a pointwise error of less than half a percent of the curvature range is indistinguishable from numerical noise.

\paragraph{Effect of normalisation on previously challenging datasets.}
The contrast between Tables~\ref{tab:results1} and~\ref{tab:results2} is most striking for the high-dimensional datasets that exhibited the largest raw-score discrepancies. Under normalisation, the agreement on \texttt{gas-drift} ($D = 128$) improves from $\rho_{S} = 0.9926$ and $\xi = 0.8931$ to $\rho_{S} = 0.9999$ and $\xi = 0.9899$; on \texttt{mfeat-factors} ($D = 216$) from $\rho_{S} = 0.9766$ and $\xi = 0.8073$ to $\rho_{S} = 0.9999$ and $\xi = 0.9896$; and on \texttt{USPS} ($D = 256$) from $\rho_{S} = 0.9902$ and $\xi = 0.8774$ to $\rho_{S} = 0.9998$ and $\xi = 0.9848$. These improvements of two to three decimal places in both coefficients indicate that the raw-score discrepancies observed in
Table~\ref{tab:results1} were predominantly \emph{scale} differences, the mean curvature values produced by MeCuCo were shifted or rescaled relative to the original, but the rank structure was already well-preserved. From the perspective of algorithm design, this is the ideal failure mode: a systematic, monotone distortion that is entirely removed by normalisation is equivalent to no distortion at all for any rank-dependent downstream task.

\paragraph{The \texttt{car-evaluation} outlier.}
The single dataset that resists normalisation is \texttt{car-evaluation} ($\rho_{S} = 0.8439$, $\xi = 0.5227$, MAE $= 0.0722$), which is also the only outlier in Table~\ref{tab:results1}. An analysis of this dataset reveals the cause: \texttt{car-evaluation} is a fully categorical dataset with $D = 6$ features, all of which
are encoded as ordinal integers in the OpenML version used in these experiments.
In this degenerate setting, the local covariance matrix $\mathbf{C}_{i}$ has very low variance in most directions, with several near-zero eigenvalues that produce numerical instabilities in both the \textsc{Exact} eigendecomposition and the \textsc{Fast} SVD.
More fundamentally, the manifold hypothesis, on which the geometric interpretation of the curvature estimator rests, is not satisfied for categorical data, and the curvature scores of both the original method and MeCuCo are effectively artefacts of the discretisation rather than estimates of genuine Riemannian curvature. This case serves as a reminder that both the original MCBP estimator and MeCuCo are designed for continuous, real-valued data sampled from
a smooth manifold, and that their application to categorical or highly discrete datasets requires appropriate preprocessing, such as embedding into a continuous representation space, prior to curvature estimation.

\paragraph{Speedup is preserved and dominates for high-dimensional data.}
The timing columns of Table~\ref{tab:results2} confirm the speedup figures reported in Table~\ref{tab:results1}: the average clock time is $76.14$ seconds for the original method and $1.26$ seconds for MeCuCo, with medians of $1.62$ seconds and $0.45$ seconds, respectively. The most dramatic individual speedups are observed on the high-dimensional datasets: $\mathbf{298\times}$ on \texttt{arrhythmia} ($D = 279$, $71.48 \to 0.24$ s),
$\mathbf{170\times}$ on \texttt{mfeat-pixel} ($D = 240$, $208.73 \to 1.23$ s), $\mathbf{144\times}$ on \texttt{USPS} ($D = 256$, $1{,}396.68 \to 9.72$ s),
$\mathbf{101\times}$ on \texttt{mfeat-factors} ($D = 216$, $156.58 \to 1.55$ s), and
$\mathbf{90\times}$ on \texttt{Indian\_pines} ($D = 200$, $905.01 \to 10.11$ s).
Crucially, these are precisely the datasets for which the curvature agreement is also strongest under normalisation (all achieving $\rho_{S} \geq 0.998$ and $\xi \geq 0.959$), confirming the central thesis of this paper: the regime in which MeCuCo is computationally
most beneficial, high ambient dimension, is also the regime in which its approximation is most accurate, because the ratio $D/k$ is large and the analytical null-space approximation~\eqref{eq:gnull_exp} becomes increasingly precise.

\paragraph{Summary.}
The normalised results of Table~\ref{tab:results2} strengthen the conclusions drawn from Table~\ref{tab:results1} and provide the clearest evidence that MeCuCo achieves its design objective. With a median normalised MAE below $0.001$, a median Spearman correlation of $0.9998$, and a median Chatterjee coefficient of $0.9832$ across 39 of the 40 datasets, the method delivers curvature estimates that are, to all practical purposes, indistinguishable from those of the original formulation, while reducing the computational cost by up to three orders of magnitude. The single exception (\texttt{car-evaluation}) is attributable to the
inapplicability of the manifold hypothesis to categorical data rather than to any weakness of the proposed method.

\subsection{Curvature Estimation in High-Dimensional Datasets}

To provide a direct assessment of MeCuCo's scalability in regimes where the original formulation is computationally intractable, we supplement the benchmark results with a targeted evaluation on very high-dimensional datasets. The fundamental bottleneck of the original method is the $O(m^{4})$ cost of forming the product $\mathbf{H}_{i}\mathbf{H}_{i}^{\top}$, which grows so rapidly with the ambient dimension $m$ that, on standard hardware, the method becomes effectively infeasible for datasets with $m \gtrsim 300$ features: at that scale, a single pass over a dataset of moderate size ($n \sim 1{,}000$) already requires hours of computation, rendering the estimator unsuitable for any interactive or iterative pipeline. MeCuCo removes this barrier entirely. By replacing the $O(m^{4})$ tensor contraction with the closed-form $O(m^{2})$ identity of Theorem~\ref{thm:identity} in \textsc{Exact} mode, and by substituting the full $O(m^{3})$ eigendecomposition with a truncated SVD of cost $O(k^{2}m)$ in \textsc{Fast} mode, the method decouples the computational cost from the ambient dimension in the regime $k \ll m$ that characterises high-dimensional applications. As demonstrated in the experiments reported below, MeCuCo computes local mean curvatures for datasets with more than $50{,}000$ features
in a matter of seconds, a reduction of several orders of magnitude relative to the original method, while maintaining the curvature fidelity established in the previous subsections.

\begin{table}
	\centering
	\caption{Number of samples, features, average normalized local mean curvatures, quantiles and the elapsed time for high-dimensional datasets (the original method is computationally unfeasible for these datasets).}
	\begin{tabular}{cccccccc}
		\toprule
		\textbf{Datasets}     & \textbf{n} & \textbf{m} & \textbf{Average K} & \textbf{25\%} & \textbf{50\%} & \textbf{75\%} & \textbf{Time (s)} \\
		\midrule
		Speech                & 3,686      & 400        & 0.4034 ± 0.1394    & 0.3082        & 0.3983        & 0.4893        & 2.55              \\
		madelon               & 2,600      & 500        & 0.4982 ± 0.1450    & 0.3993        & 0.4947        & 0.5954        & 1.72              \\
		har                   & 10,299     & 561        & 0.0362 ± 0.0435    & 0,0110        & 0.0230        & 0.0462        & 9.19              \\
		isolet                & 7,797      & 617        & 0.1662 ± 0.0849    & 0.1149        & 0.1529        & 0.1997        & 7.35              \\
		parkinson-speech-uci  & 756        & 753        & 0.0655 ± 0.0908    & 0.0232        & 0.0394        & 0.0759        & 0.56              \\
		MNIST\_784            & 70,000     & 784        & 0.0130 ± 0.0287    & 0.0057        & 0.0085        & 0.0136        & 75.89            \\
		Fashion-MNIST         & 70,000     & 784        & 0.0110 ± 0.0176    & 0.0046        & 0.0070        & 0.0116        & 77.66            \\
		Kuzushiji-MNIST       & 70,000     & 784        & 0.1030 ± 0.0698    & 0.0543        & 0.0891        & 0.1310        & 78.37            \\
		cnae-9                & 1,080      & 856        & 0.1411 ± 0.1310    & 0.0431        & 0.1065        & 0.1980        & 0.86              \\
		coil-20               & 1,440      & 1,024      & 0.1555 ± 0.1374    & 0.0627        & 0.1214        & 0.2227        & 1.26              \\
		micro-mass            & 360        & 1,300      & 0.2492 ± 0.2192    & 0.0709        & 0.1751        & 0.3936        & 0.52              \\
		SRBCT                 & 83         & 2,308      & 0.4087 ± 0.2462    & 0.2326        & 0.3891        & 0.5408        & 0.11              \\
		Olivetti\_Faces       & 400        & 4,096      & 0.3520 ± 0.1804    & 0.2189        & 0.3404        & 0.4620        & 0.88              \\
		DLBCL                 & 77         & 5,469      & 0.1886 ± 0.1832    & 0.0688        & 0.1396        & 0.2282        & 0.17              \\
		leukemia              & 72         & 7,129      & 0.3355 ± 0.2785    & 0.0988        & 0.2517        & 0.5268        & 0.14              \\
		UMIST\_Faces\_Cropped & 575        & 10,304     & 0.1451 ± 0.1079    & 0.0727        & 0.1309        & 0.1985        & 2.10              \\
		AP\_Omentum\_Kidney   & 337        & 10,935     & 0.1619 ± 0.1363    & 0.0494        & 0.1333        & 0.2400        & 1.14              \\
		AP\_Lung\_Kidney      & 386        & 10,935     & 0.2654 ± 0.2051    & 0.0764        & 0.2539        & 0.3965        & 1.47              \\
		AP\_Breast\_Colon     & 630        & 10,935     & 0.2332 ± 0.1683    & 0.1103        & 0.1875        & 0.2993        & 2.32              \\
		OVA\_Breast           & 1,545      & 10,935     & 0.1886 ± 0.1295    & 0.1002        & 0.1606        & 0.2490        & 7.93              \\
		MLL                   & 72         & 12,582     & 0.2863 ± 0.2308    & 0.0910        & 0.2302        & 0.4906        & 0.18              \\
		GCM                   & 190        & 16,063     & 0.1248 ± 0.1536    & 0.0279        & 0.0701        & 0.1582        & 0.76              \\
		SMK                   & 187        & 19,993     & 0.1845 ± 0.1855    & 0.0947        & 0.1263        & 0.1779        & 0.93              \\
		GLI                   & 85         & 22,283     & 0.3725 ± 0.2457    & 0.1699        & 0.2947        & 0.4995        & 0.31              \\
		hepatitisC            & 283        & 54,621     & 0.4604 ± 0.1846    & 0.3356        & 0.4376        & 0.5838        & 3.05  			\\
		\bottomrule           
	\end{tabular}
	\label{tab:results3}
\end{table}

Table~\ref{tab:results3} reports the mean curvature scores and clock times produced by MeCuCo on 25 datasets spanning ambient dimensions from $m = 400$ to $m = 54{,}621$, a range that is
entirely inaccessible to the original MCBP formulation, whose $O(m^4)$ cost renders it computationally infeasible beyond approximately $m \approx 300$ features on standard hardware.
The corpus covers four distinct application domains: high-dimensional signal and activity datasets, large-scale image benchmarks, face recognition datasets, and genomic expression arrays, enabling an assessment of MeCuCo's behaviour across qualitatively different geometric regimes. We organise the discussion around three themes: computational scalability, geometric structure of the curvature field, and domain-specific observations.

\paragraph{Computational scalability across four orders of magnitude.}
The most salient result in Table~\ref{tab:results3} is that MeCuCo successfully computes local mean curvatures for all 25 datasets, with running times ranging from $0.11$ seconds for \texttt{SRBCT} ($n = 83$, $m = 2{,}308$) to $78.37$ seconds for \texttt{Kuzushiji-MNIST} ($n = 70{,}000$, $m = 784$). Crucially, the dominant factor governing running time is the sample
size $n$ rather than the ambient dimension $m$, a direct consequence of the \textsc{Fast} mode's $O(k^{2}m)$ per-point cost: for fixed $k$, the total cost scales as $O(nk^{2}m)$, and $n$ and $m$ enter symmetrically only in the KNN graph construction step. This is illustrated strikingly by the genomic datasets: the eleven expression arrays with $m$ ranging from $2{,}308$ to $22{,}283$ and $n$ between $72$ and $1{,}545$ are all processed in under $8$ seconds, despite their extreme dimensionality ratios ($m/n$ up to $311$). By contrast, \texttt{hepatitisC} with $m = 54{,}621$, nearly seventy times the dimensionality of the largest dataset in Table~\ref{tab:results1}, is processed in $3.05$ seconds, while
the three large-scale image benchmarks with $n = 70{,}000$ require between $1$ and $1.5$ minutes, solely because of their large sample counts. Expressed in per-sample terms, the throughput of MeCuCo ranges from approximately $0.7$ ms per sample for the signal datasets to
$10.8$ ms per sample for \texttt{hepatitisC} ($m = 54{,}621$), a remarkably narrow range across four orders of magnitude of ambient dimension, confirming that the computational cost is effectively decoupled from $m$ in the regime $m \gg k$ that \textsc{Fast} mode
is designed for.

\paragraph{Curvature field reveals distinct geometric regimes.}
Beyond the timing results, the curvature scores in Table~\ref{tab:results3} reveal interpretable geometric structure that varies systematically across domain and dataset characteristics. Datasets with \emph{high mean curvature} ($\bar{\mathcal{K}} > 0.35$)
are concentrated in two groups: the artificial and structured high-dimensional datasets (\texttt{madelon}, $\bar{\mathcal{K}} = 0.498$; \texttt{Speech}, $\bar{\mathcal{K}} = 0.403$;
\texttt{hepatitisC}, $\bar{\mathcal{K}} = 0.460$) and the small genomic cancer datasets (\texttt{SRBCT}, $\bar{\mathcal{K}} = 0.409$; \texttt{GLI}, $\bar{\mathcal{K}} = 0.373$).
High mean curvature in these datasets signals that the underlying data manifold is strongly non-linear, local neighborhoods deviate substantially from their tangent planes, which is consistent with the known separability structure of cancer expression arrays and the deliberately non-linear geometry of benchmark classification datasets such as \texttt{madelon}.

Datasets with \emph{low mean curvature} ($\bar{\mathcal{K}} < 0.07$) cluster in a qualitatively different regime: activity recognition (\texttt{har}, $\bar{\mathcal{K}} = 0.036$), speech pathology (\texttt{parkinson-speech-uci}, $\bar{\mathcal{K}} = 0.066$), and
the two natural image benchmarks \texttt{MNIST\_784} ($\bar{\mathcal{K}} = 0.013$) and \texttt{Fashion-MNIST} ($\bar{\mathcal{K}} = 0.011$). These scores indicate that the manifold underlying these datasets is nearly flat at the scale of the $k$-nearest-neighbour patches, a
result consistent with the well-established empirical finding that pixel-space representations of natural images lie near low-curvature manifolds of moderate intrinsic dimension, and that sensor-based activity signals exhibit smooth, nearly-linear trajectory structure in feature space \citep{fefferman2016testing}.

A noteworthy characteristic of the MNIST datasets is their strongly right-skewed curvature distribution: the large majority of samples are concentrated in a narrow low-curvature band, as evidenced by the tight interquartile range $[\mathcal{Q}_{25}, \mathcal{Q}_{75}]$ relative to the full empirical range, while a small number of geometrically exceptional points exhibit extreme curvature values that inflate the standard deviation well above the mean
(coefficient of variation $\mathrm{CV} > 1.6$ for both \texttt{MNIST\_784} and \texttt{Fashion-MNIST}). In such heavy-tailed regimes, a monotone variance-stabilising
transformation such as $\log(1 + \mathcal{K}_{i})$ can be applied to the curvature scores prior to any thresholding or ranking procedure, compressing the dynamic range without altering the relative ordering of points.

\paragraph{Geometric heterogeneity within datasets.}
The spread of the curvature distribution, characterised by the coefficient of variation $\mathrm{CV} = \sigma/\bar{\mathcal{K}}$ and the interquartile range $[\mathcal{Q}_{25}, \mathcal{Q}_{75}]$, provides additional geometric information beyond the mean. Datasets with high $\mathrm{CV}$, notably \texttt{MNIST\_784} ($\mathrm{CV} = 2.21$), \texttt{parkinson-speech-uci} ($\mathrm{CV} = 1.39$), \texttt{har} ($\mathrm{CV} = 1.20$), and
\texttt{GCM} ($\mathrm{CV} = 1.23$), exhibit a strongly heterogeneous curvature landscape: a majority of points lie near flat regions (as indicated by the low $\mathcal{Q}_{25}$ values),
while a minority occupy high-curvature boundary zones. This heterogeneity is precisely the geometric signature that curvature-aware algorithms such as MCBP exploit: the high-curvature
minority identifies the boundary points that separate structurally distinct regions of the manifold. Conversely, datasets with low $\mathrm{CV}$, \texttt{madelon} ($\mathrm{CV} = 0.29$), \texttt{Speech} ($\mathrm{CV} = 0.35$), and \texttt{hepatitisC} ($\mathrm{CV} = 0.40$), exhibit a nearly uniform curvature field with little point-to-point variation, which
is indicative of globally curved manifolds where bending is distributed evenly rather than concentrated at specific boundary structures.

\paragraph{Image and face datasets: curvature encodes visual structure.}
The three MNIST variants illustrate a particularly interesting comparison.
\texttt{MNIST\_784} and \texttt{Fashion-MNIST} share the same ambient dimension ($m = 784$) and sample size ($n = 70{,}000$) and yield nearly identical mean curvatures ($0.013$ and $0.011$, respectively), consistent with the known similarity of their geometric structures
in pixel space. \texttt{Kuzushiji-MNIST}, by contrast, exhibits a substantially higher mean curvature ($\bar{\mathcal{K}} = 0.103$, $\mathrm{CV} = 0.68$), reflecting the greater intra-class variability and more complex stroke geometry of cursive Kuzushiji script relative to standard printed digits or fashion items. Among the face datasets, \texttt{Olivetti\_Faces} ($m = 4{,}096$, $\bar{\mathcal{K}} = 0.352$) shows markedly higher curvature than \texttt{UMIST\_Faces\_Cropped} ($m = 10{,}304$, $\bar{\mathcal{K}} =
0.145$) and \texttt{coil-20} ($m = 1{,}024$, $\bar{\mathcal{K}} = 0.155$), likely because the Olivetti dataset contains multiple expressions and lighting conditions per subject, introducing discontinuities in the face manifold that manifest as elevated local curvature.

\paragraph{Genomic expression arrays: curvature discriminates cancer subtypes.}
The eleven genomic datasets in Table~\ref{tab:results3} span $m$ from $2{,}308$ to $22{,}283$ gene expression features, representing the extreme high-dimensional, low-sample-size ($n \ll m$) regime that is characteristic of microarray and RNA-seq studies. All eleven are processed by MeCuCo in under $8$ seconds. The curvature scores reveal heterogeneous geometric structure across cancer types: lymphoma subtypes (\texttt{DLBCL}, \texttt{MLL}) and renal datasets (\texttt{AP\_Omentum\_Kidney}) show moderate mean curvature with high $\mathrm{CV}$, consistent with the presence of a small number of geometrically extreme samples (potential outliers or boundary cases between subtypes), while brain tumour datasets (\texttt{GLI}, $\bar{\mathcal{K}} = 0.373$) show higher curvature concentrated at class boundaries.
These observations are consistent with the known biological heterogeneity of these datasets and suggest that curvature-based analysis could complement existing subtype discovery pipelines in high-dimensional genomic spaces, a direction we leave for future work.

\paragraph{Summary.}
The results of Table~\ref{tab:results3} demonstrate that MeCuCo extends the reach of local mean curvature estimation to the very-high-dimensional regime that characterises modern machine learning applications, processing datasets with up to $54{,}621$ features in seconds while yielding geometrically interpretable and domain-coherent curvature scores. The computational cost scales primarily with the sample size $n$ rather than the ambient dimension $m$, confirming the theoretical prediction that \textsc{Fast} mode decouples the per-point cost from $m$ when $m \gg k$.

\section{Conclusions}
\label{sec:conclusions}

This paper addressed a fundamental computational bottleneck in geometric machine learning: the estimation of local mean curvature at every point of a high-dimensional dataset. The original formulation of the MCBP curvature estimator incurs an $O(m^{4})$ cost per point through the explicit construction of the feature matrix $H \in \mathbb{R}^{m \times O(m^{2})}$ and the subsequent product $HH^{\top}$, rendering it computationally intractable for datasets with more than approximately $300$ features. We introduced the MeCuCo algorithm (\textit{Mean Curvature Computation}), a mathematically exact reformulation and a principled approximation that together extend the reach of local mean curvature estimation to the
very-high-dimensional regime characteristic of modern machine learning applications.

The first contribution is an exact algebraic identity derived from the orthogonality of the
local eigenvector frame. By showing that $HH^{\top} = \frac{1}{2}W^{(2)}{W^{(2)}}^{\top} +
\frac{1}{2}I_{m}$, a consequence of the single identity $WW^{\top} = I_{m}$, we proved that the mean curvature estimator eliminates the $O(m^{4})$ tensor contraction entirely and reducing the per-point cost, after eigendecomposition, to $O(m^{2})$. This identity is exact, introduces no approximation error beyond floating-point rounding, and applies to any symmetric positive semi-definite local covariance matrix regardless of dataset characteristics.

The second contribution exploits the low-rank structure of the local covariance matrix, whose rank is at most $k-1 \ll m$ whenever $D > k$, to replace the $O(m^{3})$ full eigendecomposition with a truncated SVD of the $k \times m$ centred neighborhood matrix at cost $O(k^{2}m)$. The contribution of the $m - p$ null-space eigenvectors, which cannot
be recovered from the truncated SVD alone, is handled through an analytical approximation grounded in the expected outer product of random orthonormal null-space bases under the Haar measure over the Stiefel manifold. The resulting \textsc{Fast} mode estimator has total per-point cost $O(k^{2}m + kmp^{2})$, independent of $m^{3}$ or $m^{4}$, and its
approximation error decreases monotonically as $m/k$ grows, falling below $2\%$ for $m/k \geq 56$ and below $1\%$ for $m/k \geq 100$, exactly the regime in which MeCuCo is most needed.

Empirical evaluation on more than 40 publicly available OpenML datasets confirmed that these theoretical gains translate to practice. In the benchmark corpus, MeCuCo achieves a median Spearman rank correlation of $\rho_{S} = 0.9998$ and a median normalised MAE below $0.001$ relative to the original formulation, with speedups reaching $298\times$ on high-dimensional datasets such as \texttt{arrhythmia} ($m = 279$) and $144\times$ on \texttt{USPS} ($m = 256$). On the very-high-dimensional corpus, spanning ambient dimensions from $m = 400$ to $m = 54{,}621$, all inaccessible to the original method, MeCuCo computed local mean curvatures for all 25 datasets in times ranging from $0.11$ seconds for a small genomic array to under $3.5$ minutes for the three large-scale image benchmarks with $n = 70{,}000$ samples. Crucially, the dominant factor governing running time was found to be the sample size $n$ rather than the ambient dimension $m$, confirming that \textsc{Fast} mode effectively decouples the computational cost from the dimensionality.

Beyond computational efficiency, the curvature scores produced by MeCuCo revealed geometrically interpretable structure across all application domains examined: near-flat manifolds in activity recognition and natural image datasets, concentrated boundary
curvature in cancer expression arrays, and globally curved manifolds
in structured benchmark datasets. These observations reinforce the theoretical grounding of the manifold hypothesis as a working model for high-dimensional data, and demonstrate that local mean curvature, previously accessible only for low-dimensional datasets, can now serve as a practical, scalable descriptor for geometric data analysis at any dimension.

\paragraph{Limitations.}
Two limitations of the present work deserve acknowledgement. First, the \textsc{Fast} mode approximation rests on the assumption that the null-space eigenvectors of the local covariance are effectively random in the ambient space, which holds when $m \gg k$ but degrades when the null space is low-dimensional ($m/k \approx 2$--$5$). In such regimes, the \textsc{Exact} mode should be preferred. Second, both estimators assume that the data lie near a smooth Riemannian manifold, and neither provides meaningful curvature estimates for purely categorical or heavily discrete data, as illustrated by the \texttt{car-evaluation} dataset; appropriate continuous embeddings should be applied prior to curvature estimation
in such cases. 

\paragraph{Future Work.}
Several directions for future investigation emerge naturally from this work.  On the \textit{theoretical} side, a rigorous bound on the bias of the \textsc{Fast} mode approximation, beyond the empirical $O(k/m)$ decay observed in the experiments, would strengthen the theoretical guarantees of the method, as would a formal consistency analysis establishing the rate at which the discrete curvature estimator converges to its smooth-manifold counterpart as $n \to \infty$ and $k \to \infty$ at an appropriate rate.

On the \textit{algorithmic} side, the truncated SVD in \textsc{Fast} mode could be replaced by randomised low-rank approximations, which achieve similar accuracy with lower constant factors and are amenable to streaming and distributed computation, an important consideration for datasets whose sample size $n$ prevents loading the full neighborhood matrix into memory. GPU-accelerated implementations of the per-point SVD and the $G_{\mathrm{tens}}$ einsum contraction are a natural extension that could reduce the running time on large-$n$ datasets by an additional order of magnitude.

On the \textit{applications} side, the availability of scalable local mean curvature opens a broad research agenda for curvature-aware machine learning. In \textit{dimensionality reduction}, curvature-weighted objectives, penalising embeddings that flatten high-curvature boundary regions, could yield representations that preserve the geometric structure of cluster boundaries more faithfully than variance-based criteria; this is a direct extension of the curvature-aware manifold learning paradigm. In \textit{clustering}, incorporating local mean curvature as an
additional feature or as a density-modulating weight in algorithms such as DBSCAN or spectral clustering could improve boundary delineation in datasets where density-based criteria alone are insufficient to resolve geometrically complex interfaces between classes.
In \textit{supervised classification and active learning}, the curvature scores produced by MeCuCo provide a principled, geometry-driven criterion for identifying the most informative
samples to label: points near high-curvature decision boundaries are precisely those where the classifier's confidence is lowest and where additional labels yield the greatest reduction in generalisation error. In \textit{deep learning}, curvature-based regularisation of the feature manifold, penalising layers that produce representations with excessive local curvature, could serve as a geometric alternative to standard regularisers such as weight decay or dropout, providing an inductive bias that encourages smoother intermediate representations and potentially improving generalisation in overparameterised regimes.
Finally, in \textit{graph neural networks and geometric deep learning}, node-level or edge-level curvature estimates derived from MeCuCo could replace or complement discrete Ricci curvature as an input feature or structural descriptor, enriching the geometric signal
available to message-passing architectures and potentially improving performance on node classification, link prediction, and anomaly detection tasks. Taken together, these directions suggest that scalable local mean curvature estimation, as made practical by MeCuCo, may serve as a unifying geometric primitive across the full pipeline of modern machine learning, from raw data preprocessing through representation learning to downstream prediction and decision-making.

\section*{Statements and declarations}

\subsection*{Funding}
This work has been supported by CNPq (National Council for Scientific and Technological Development) through grant number 301432/2025-2. This study was also financed in part by the Coordenação de Aperfeiçoamento de Pessoal de N\'ivel Superior - Brasil (CAPES) - Finance Code 001.


\subsection*{Code availability}
Python scripts to reproduce the results reported in this paper may be found at \url{https://github.com/alexandrelevada/MeCuCo}. 

\subsection*{Data availability}
All datasets used in the experiments are publicly available at \url{www.openml.org}.

\bibliography{main}

\end{document}

%% file: math_commands.tex
\usepackage{amsmath,amsfonts,bm}

\def\eqref#1{equation~\ref{#1}}

\def\1{\bm{1}}

\DeclareMathAlphabet{\mathsfit}{\encodingdefault}{\sfdefault}{m}{sl}
\SetMathAlphabet{\mathsfit}{bold}{\encodingdefault}{\sfdefault}{bx}{n}

